\DeclareMathOperator{\E}{\mathbb{E}}
\newcommand{\mbf}{\mathbf}
\newcommand{\sbf}{\boldsymbol}
\newtheorem{theorem}{Theorem}
\newtheorem{lemma}{Lemma}
\newtheorem{proposition}{Proposition}
\newcommand{\BlackBox}{\rule{1.5ex}{1.5ex}}  
\newenvironment{proof}{\par\noindent{\bf Proof\ }}{\hfill\BlackBox}
\newcounter{counter}
\title{Deep Bandits Show-Off: \\
Simple and Efficient Exploration with Deep Networks}
\author{%
Rong Zhu \\ 
Institute of Science and Technology for Brain-inspired Intelligence, Fudan University \\
\texttt{rongzhu@fudan.edu.cn}

\AND 

Mattia Rigotti\thanks{Corresponding author} \\
IBM Research AI\\
\texttt{mr2666@columbia.edu}
}
\begin{document}

\maketitle

\begin{abstract}
Designing efficient exploration is central to Reinforcement Learning due to the fundamental problem posed by the exploration-exploitation dilemma. Bayesian exploration strategies like Thompson Sampling resolve this trade-off in a principled way by modeling and updating the distribution of the parameters of the action-value function, the outcome model of the environment.
However, this technique becomes infeasible for complex environments due to the computational intractability of maintaining probability distributions over parameters of outcome models of corresponding complexity.
Moreover, the approximation techniques introduced to mitigate this issue typically result in poor exploration-exploitation trade-offs, as observed in the case of deep neural network models with approximate posterior methods that have been shown to underperform in the deep bandit scenario.

In this paper we introduce \emph{Sample Average Uncertainty (SAU)}, a simple and efficient uncertainty measure for contextual bandits.
While Bayesian approaches like Thompson Sampling estimate outcomes uncertainty indirectly by first quantifying the variability over the parameters of the outcome model, SAU is a frequentist approach that directly estimates the uncertainty of the outcomes based on the value predictions.
Importantly, we show theoretically that the uncertainty measure estimated by SAU asymptotically matches the uncertainty provided by Thompson Sampling, as well as its regret bounds.
Because of its simplicity SAU can be seamlessly applied to deep contextual bandits as a very scalable drop-in replacement for epsilon-greedy exploration.
We confirm empirically our theory by showing that SAU-based exploration outperforms current state-of-the-art deep Bayesian bandit methods on several real-world datasets at modest computation cost,
and make the code to reproduce our results available at \url{https://github.com/ibm/sau-explore}.
\end{abstract}

\section{Introduction}

The \emph{exploration-exploitation dilemma} is a fundamental problem in models of decision making under uncertainty in various areas of statistics, economics, machine learning, game theory, adaptive control and management.
Given a set of actions associated with unknown probabilistic rewards, an agent has to decide whether to exploit familiar actions to maximizing immediate reward or to explore poorly understood or unknown actions for potentially finding ways to improve future rewards.

Quantifying the uncertainty associated with the value of each action is a key component of conventional algorithms for addressing the exploration-exploitation dilemma.
In particular, it is central to the two most successful exploration strategies commonly adopted in bandit settings: 
\emph{Upper Confidence Bound} (UCB)  
and \emph{Thompson Sampling}.
The UCB algorithm \cite{LR:85, Auer:02, Woodroofe:79, Auer:03, LZ:08, Dani:08, RT:10, APS:11, Abbasi:12,  PR:13, Slivkins:14} follows the principle of \emph{optimism in the face of uncertainty}, which promotes exploration by maintaining confidence sets for action-value estimates and then choosing actions optimistically within these confidence sets.
Thompson Sampling (TS), introduced by \cite{T:33} and successfully applied in a wide range of settings \cite{Strens:00, CL:11, AG:13, RV:14}, is based on the principle of \emph{sampling in the face of uncertainty}, meaning that it samples actions from the posterior distribution over action-values given past rewards. 

In modern reinforcement learning (RL), the flexible generalization capabilities of neural networks brought about by Deep RL have proven successful in tackling complex environments by learning mappings from high-dimensional observations directly to value estimates \cite{Mnih:15}.
However, obtaining uncertainty measures over complex value functions like neural network models becomes challenging because of the intractability of estimating and updating posteriors over their parameters, limiting the applicability of Bayesian exploration strategies like UCB and TS.
Recently, several proposals to address this challenge have been put forth that rely on approximations of the posterior over value functions.
Unfortunately, these methods tend to underperform empirically compared to much simpler heuristics.
For instance, \cite{Riquelme:18} showed that in contextual bandit tasks the main approximate Bayesian posterior methods for deep neural networks are consistently beaten by simple baselines such as combining neural network value functions with a basic exploration strategy like epsilon-greedy, or using simple action-values like linear regression where the exact posterior can be computed.

In this paper we propose a novel uncertainty measure which departs from the Bayesian approach of estimating the uncertainty over the parameters of the value prediction model.
Our uncertainty measure, which we call \emph{Sample Average Uncertainty (SAU)} is a frequentist quantity that only depends on the value prediction of each action.
In particular, unlike UCB and TS, exploration based on SAU does not require the costly computation of a posterior distribution over models in order to estimate uncertainty of their predictions.
In fact, instead of first estimating the uncertainty over the parameters of the value function to then use it to quantify the uncertainty over outcomes, SAU directly estimates uncertainty over outcomes by measuring the variance of sample averages.
This result is then plugged into the current estimate of the outcome model.

With our new measure of uncertainty of the expected action-values, we build two SAU-based exploration strategies: one based on the principle of \emph{``optimism in the face of SAU''} that we name \textsf{SAU-UCB}, and a second one based on \emph{``sampling in the face of SAU''} that we name \textsf{SAU-Sampling}.

We investigate the use of these new exploration strategies to tackle contextual bandit problems, and show that SAU is closely related to the mean-squared error in contextual bandits.
This allows us to show analytically that in the case of Bernoulli multi-armed bandits the SAU measure converges to the uncertainty of the action-value estimates that are obtained by TS, despite SAU being much simpler to compute and not needing to rely on maintaining the posterior distribution.
In addition, we derive an upper bound on the expected regret incurred by our SAU algorithms in multi-armed bandits that shows that they achieve the optimal logarithmic regret.

Finally, we empirically study the deployment of \textsf{SAU-UCB} and \textsf{SAU-Sampling} in the deep bandit setting and use them as exploration strategy for deep neural network value function models.
Concretely, we follow the study of \cite{Riquelme:18} and show that SAU consistently outranks the deep Bayesian bandit algorithms that they analyzed on the benchmarks that they proposed.

\section{Problem Formulation: Contextual Bandits}
\label{sec:background}

The contextual bandit problem is a paradigmatic model for the study of the exploration-exploitation trade-off and is formulated as follows.
At each time step $n$ we observe a context $\mbf{x}_n$, select an action $a_n$ from a set $\mathbb{K}=\{1, \ldots, K\}$, after which we receive a reward $r_n$.
The \emph{value of an action} $a$ (in context $\mbf{x}_n\in\mathbb{R}^p$) is defined as the expected reward given that $a$ is selected:
\begin{equation}\label{eqn:model}
    \E[r_n|a_n=a]=\mu(\mbf{x}_n, \sbf{\theta}_a),
\end{equation}
where in general the action-values $\mu(\cdot)$ depend on unknown parameters $\sbf{\theta}_a\in \mathbb{R}^p$.

Our goal is to design a sequential decision-making policy $\pi$ that over time learns the action parameters $\sbf{\theta}_a$ which maximize the expected reward.
This goal is readily quantified in terms of minimizing \emph{expected regret}, where we say that at step $n$ we incur expected regret
\begin{equation}\label{eqn:regret}
    \max\limits_{a'\in\mathbb{K}}\{\mu(\mbf{x}_n,\sbf{\theta}_{ a'})\} - \mu(\mbf{x}_n, \sbf{\theta}_{a_n}),
\end{equation}
i.e.\ the difference between the reward received by playing the optimal action and the one following the chosen action $a_n$.
One way to design a sequential decision-making policy $\pi$ that minimizes expected regret is to quantify the uncertainty around the current estimate of the unknown parameters $\sbf{\theta}_a$.
TS for instance does this by sequentially updating the posterior of $\sbf{\theta}_a$ after each action and reward.
This paper presents a novel and simpler alternative method to estimate uncertainty.

\section{Exploration based on Sample Average Uncertainty}
\label{sec:sau-exploration}

\subsection{Sample Average Uncertainty (SAU)}
\label{sec:sau-algorithm}

In this section, we begin with introducing our novel measure of uncertainty SAU.  
Let $\mathbb{T}_{a}$ denote the set of time steps when action $a$ was chosen so far, and let $n_{a}$ be the size of this set.
Based on the $n_{a}$ rewards $\{r_n\}_{n\in \mathbb{T}_{a}}$ obtained with action $a$, the sample mean reward given action $a$ is: 
\begin{equation*}
    \bar{r}_{a}=n_{a}^{-1}\sum\nolimits_{n\in\mathbb{T}_{a}}r_n.
\end{equation*}
At this point we reiterate that exploitation and exploration are customarily traded off against each other with a Bayesian approach that estimates the uncertainty of the action-values on the basis of a posterior distribution over their parameters given past rewards.
Instead, we propose a \emph{frequentist approach} that directly measures the uncertainty of the sample average rewards that was just computed.
Direct calculation using eq.~(\ref{eqn:model}) then gives us that the variance of the sample mean reward is 
\begin{equation}
    \text{Var}(\bar{r}_{a})=\bar{\sigma}_a^2/n_{a}, \quad\mbox{where}\quad
    \bar{\sigma}_a^2=n_{a}^{-1}\sum\nolimits_{n\in\mathbb{T}_{a}}\sigma_{n,a}^2
    \quad\text{with}\quad
    \sigma_{n,a}^2=\E\left[(r_n-\mu(x_n, \sbf{\theta}_a))^2\right].
    \notag
\end{equation}
Assuming that there is a sequence of estimators $\{\hat{\sbf{\theta}}_{n,a}\}_{n\in\mathbb{T}_{a}}$ of $\sbf{\theta}_a$, we can replace $\sbf{\theta}_a$ with $\hat{\sbf{\theta}}_{n,a}$ at each $n\in\mathbb{T}_{a}$
to approximate $\bar{\sigma}_a^2$ with a convenient statistics $\tau_{a}^2$ defined as
\begin{equation}\label{eqn:var}
    \tau_a^2
    =n_{a}^{-1}\sum\nolimits_{n\in\mathbb{T}_{a}}\left(r_n-\mu(x_n, \hat{\sbf{\theta}}_{n,a})\right)^2. 
\end{equation}
With this we get an approximate sample mean variance of
\begin{equation}\label{eqn:v-bar-est}
    \widehat{\text{Var}}(\bar{r}_{a})=\tau_a^2/n_{a}.
\end{equation}
The central proposal of this paper is to use $\widehat{\text{Var}}(\bar{r}_{a})$ as a measure of the uncertainty of the decision sequence.
We call this quantity \emph{Sample Average Uncertainty} (SAU), since it measures directly the uncertainty of sample mean rewards $\bar{r}_a$. 
In practice, $\tau_a^2$ can be updated incrementally as follows:

\begin{enumerate}
    \item Compute the \emph{prediction residual}:
        \vspace{-0.6cm}
        \begin{equation}\label{eqn:residual}
            \hspace{4.0cm} e_n=r_n-\mu(\mbf{x}_n, \sbf{\hat{\theta}}_{n,a_n});
        \end{equation}
    \item Update \emph{Sample Average Uncertainty (SAU)}:
        \vspace{-0.55cm}
        \begin{equation}\label{eqn:var-incre}
            \hspace{4.5cm} \tau_{a_n}^2\leftarrow\tau_{a_n}^2+n_{a_n}^{-1}\left[e_n^2-\tau_{a_n}^2\right].
        \end{equation}
\end{enumerate}

Let us take a moment to contrast the uncertainty measure given by SAU and existing exploration algorithms like TS, which as we said would estimate the uncertainty of the action-value function $\mu(\cdot)$ by maintaining and updating a distribution over its parameters $\sbf{\theta}_a$.
SAU instead directly quantifies the uncertainty associated with each action by measuring the uncertainty of the sample average rewards.
The clear advantage of SAU is that it is simple and efficient to compute: 
all it requires are the prediction residuals $r_n-\mu(\mbf{x}_n, \sbf{\hat{\theta}}_{n,{a_n}})$ without any need to model or access the uncertainty of $\mu(\mbf{x}_n, \sbf{\hat{\theta}}_{n,a})$. 
Because of the simplicity of its implementation, SAU can be naturally adapted to arbitrary action-value functions. In particular, it can be used to implement an exploration strategy for action-value function parameterized as deep neural networks or other model classes for which TS would be infeasible because of the intractability of computing a probability distribution over models.

Note that in updating $\tau_a^2$ we use the residuals obtained at each step rather than re-evaluating them using later estimates.
This is a design choice motivated by the goal of minimizing the computation cost and implementation efficiency of SAU.
Moreover, this choice can be justified from the viewpoint of the statistical efficiency, since, as the number of training samples increases, the impact of initial residuals will decrease, so that the benefit of re-evaluating them incurs diminishing returns.
Proposition \ref{lemma-tau-bound} formalizes this argument by showing that indeed $\tau_a^2$ as computed in eq.\ \eqref{eqn:var-incre} is concentrated around its expectation. 
In addition, perhaps as importantly, the aim of SAU is to provide a quantity to support exploration.
The effect of potentially inaccurate residuals in the initial steps may actually be beneficial due to the introduction of additional noise driving initial exploration.
This might be in part at the root of the good empirical results.

\subsection{SAU-based Exploration in Bandit Problems}
\label{sec:sau-context}

We now use the SAU measure to implement exploration strategies for (contextual) bandit problems.
\paragraph{SAU-UCB.} 
UCB is a common way to perform exploration.
Central to UCB is the specification of an ``exploration bonus'' which is typically chosen to be proportional to the measure of uncertainty.
Accordingly, we propose to use the SAU measure $\tau_a^2$ as exploration bonus.
Specifically, given value predictions $\hat{\mu}_{n,a}=\mu(\mbf{x}_n, \sbf{\hat{\theta}}_{n,a})$ for each $a$ at step $n$, we modify the values as
\begin{equation}\label{asym-ucb}
    \widetilde{\mu}_{n,a}=\hat{\mu}_{n,a} + \sqrt{n_a^{-1}\tau_{a}^2\log n},
\end{equation}
then choose the action by  $a_n=\arg\max_a (\{\widetilde{\mu}_{n,a}\}_{a\in \mathbb{K}})$.
We call this implementation of UCB using SAU as exploration bonus: \textsf{SAU-UCB}.

\paragraph{SAU-Sampling.} ``Sampling in the face of uncertainty'' is an alternative exploration principle that we propose to implement with SAU in addition to UCB.
This is inspired by TS which samples the success probability estimate $\hat{\mu}_a$ from its posterior distribution. 
Analogously, we propose to sample values from a parametric Gaussian distribution with a mean given by the value prediction and a variance given by $\bar{\sigma}_a^2$. 
This results in sampling values $\widetilde{\mu}_{n,a}$ at each time $n$ as: 
\begin{equation}\label{asym}
    \widetilde{\mu}_{n,a}\sim\mathcal{N}\left(\hat{\mu}_{n,a}, \tau_{a}^2/n_a\right),
\end{equation}
then choosing the action by $a_n=\arg\max_a (\{\widetilde{\mu}_{n,a}\}_{a\in \mathbb{K}})$.
We call this use of SAU inspired by TS, \textsf{SAU-Sampling}.

\textsf{SAU-UCB} and \textsf{SAU-Sampling} are summarized in Algorithm \ref{alg:bandit-sau}. 

\begin{algorithm}[ht]
\begin{algorithmic}[1]
\State \textbf{Initialize:} $\sbf{\hat{\theta}}_a$, $S_a^2=1$ and $n_a=0$ for $a\in \mathbb{K}$.
\For {$n=1, 2, \ldots$}
\State Observe context $\mbf{x}_{n}$;
\For {$a=1, \ldots, K$ }
\State Calculate the prediction $\hat{\mu}_{n,a}=\mu(\mbf{x}_{n}; \sbf{\hat{\theta}}_{a})$ and $\tau_{a}^2=S_{a}^2/n_{a}$; 
\State Draw a sample 
$$\widetilde{\mu}_{n,a} = \hat{\mu}_{n,a}+\sqrt{\tau_{a}^2n_{a}^{-1}\log n} \mbox{ (SAU-UCB)}
\quad\text{or}\quad
\widetilde{\mu}_{n,a} \sim \mathcal{N}\left(\hat{\mu}_{n,a}, n_{a}^{-1}\tau_{a}^2\right) \mbox{ (SAU-Sampling)};$$
\EndFor
\State  Compute $a_n=\arg\max_a(\{\widetilde{\mu}_{n,a}\}_{a\in\mathbb{K}})$ if $n>K$, otherwise $a_n=n$; 
\State  Select action $a_n$, observe reward $r_n$;
\State Update $\sbf{\hat{\theta}}_{a_n}$ and increment $n_{a_n} \leftarrow  n_{a_n} +1$;
\State Update $S_{a_n}^2 \leftarrow  S_{a_n}^2+e_{n}^2$ using prediction error calculated as $e_{n}=r_n-\hat{\mu}_{n,a_n}$;
\EndFor
\end{algorithmic}
\caption{\textsf{SAU-UCB} and \textsf{SAU-Sampling} for bandit problems
}
\label{alg:bandit-sau}
\end{algorithm}

\subsection{Novelty and comparison with related approaches}
\label{sec:about-algorithm}
Using the variance estimation in MAB is not novel. For example  \citep{Audibert09} makes use of Bernstein’s inequality to refine confidence intervals by additionally considering the uncertainty from estimating variance of reward noise. Our approach is fundamentally different from it with two aspects. First, Algorithm \ref{alg:bandit-sau} is to propose a novel measure to approximate the uncertainty of the estimate of the mean reward that would afford such a flexible implementation and can therefore directly extended and scaled up to complicated value models like deep neural networks.
Second, our SAU quantity $\tau^2$ is the per-step squared prediction error, i.e., the average cumulative squared prediction error, as opposed to an estimate of the variance of the different arms. In fact, $\tau^2$ does not rely on the traditional variance estimation analyzed by\citep{Audibert09}, but is instead simply computed directly from the prediction. This difference makes SAU even easier to implement and adapt to settings like deep networks.

The exploration bonus in Algorithm \ref{alg:bandit-sau} is not a function of the observed context, though it is updated from historical observations of the context.
The algorithm could indeed be extended to provide a quantification of reward uncertainty that is a function of the current context by, for instance, fitting the SAU quantity as a function of context. Clearly, this will come at the cost of substantially increasing the complexity of the algorithm. Therefore to avoid this additional complexity, we instead focus the paper on the development of the SAU quantity as a simple estimate of uncertainty to efficiently drive exploration. However, exploring this possibility is a potentially exciting direction for future work.

\section{SAU in Multi-Armed Bandits}
\label{sec:contex}

\subsection{SAU Approximates Mean-squared Error and TS in Multi-armed Bandits}

Before considering the contextual bandits scenario, we analyze the measure of uncertainty provided by SAU in multi-armed bandits, and compare it to the uncertainty computed by TS.
This will help motivate SAU and elucidate its functioning.

We assume a \emph{multi-armed Bernoulli bandit}, i.e.\ at each step $n$ each action $a\in \mathbb{K}$ results in a reward sampled from $r_n\sim \text{Bernoulli}(\mu_a)$ with fixed (unknown) means $\mu_a\in[0,1]$.
Assume that action $a$ has been taken $n_{a}$ times so far, and let $\hat{\mu}_{a}$ denote the sample averages of the rewards for each action.
The \emph{prediction residual} eq.~\eqref{eqn:residual} is
$e_n=r_n - \hat{\mu}_{a_n}$ and is the central quantity to compute SAU.

\textbf{TS in the case of Bernoulli bandits} is typically applied by assuming that the prior follows a Beta distribution, i.e.\ the values are sampled from $\text{Beta}(\alpha_{a},\beta_{a})$
with parameters
$\alpha_a$ and $\beta_a$ for $a\in\mathbb{K}$.
Uncertainty around the estimated mean values are then quantified by its variance denoted by $\hat{V}_{a}$ (see Appendix \ref{proof:ts_vs_sau}).
We then have the following proposition relating SAU and TS in Bernoulli bandits:

\begin{proposition} \label{prop:ts_vs_sau}
For Beta Bernoulli bandits the expectation of the average \emph{prediction residual} $e^2_n/n_{a_n}$ is an approximate unbiased estimator of the expectation of the \emph{posterior variance} $\hat{V}_{a}$ in TS.
Concretely:
    \begin{equation*}
        \E[\hat{V}_{a_n}] = \E[e^2_n/n_{a_n}] + O\left(n_{a_n}^{-2}\right).
    \end{equation*}
\end{proposition}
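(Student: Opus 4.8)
The plan is to compute both sides in closed form as functions of $\mu_a$ and $n_a$, match their leading $1/n_a$ terms, and verify that the two remainders are each $O(n_a^{-2})$. Since $\hat{V}_{a_n}$ and $e_n^2/n_{a_n}$ are both random, the statement ``approximate unbiased estimator'' is really a claim about their expectations, so I would establish it by taking the expectation over the $\text{Binomial}$ success count on the TS side and over the rewards on the SAU side, and then comparing the resulting rational functions of $n_a$.

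For the TS side, I would use that with the conjugate prior the posterior after $n_a$ pulls of arm $a$ is $\text{Beta}(\alpha_a,\beta_a)$ with $\alpha_a = S_a+\alpha_0$ and $\beta_a = n_a-S_a+\beta_0$, where $S_a$ is the number of observed successes and $\alpha_0,\beta_0=O(1)$ are the prior parameters (e.g.\ $\alpha_0=\beta_0=1$ for the uniform prior). The posterior variance is $\hat{V}_a = \alpha_a\beta_a\big/\big[(\alpha_a+\beta_a)^2(\alpha_a+\beta_a+1)\big]$, whose denominator $(n_a+\alpha_0+\beta_0)^2(n_a+\alpha_0+\beta_0+1)$ is deterministic. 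Since $S_a\sim\text{Binomial}(n_a,\mu_a)$, I would substitute $\E[S_a]=n_a\mu_a$ and $\E[S_a^2]=n_a\mu_a(1-\mu_a)+n_a^2\mu_a^2$ into $\E[\alpha_a\beta_a]$; the leading term works out to $n_a^2\mu_a(1-\mu_a)$, so dividing by the cubic denominator gives $\E[\hat{V}_a]=\mu_a(1-\mu_a)/n_a + O(n_a^{-2})$.

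For the SAU side, I would write $e_n=r_n-\hat{\mu}_{a_n}$ with $r_n\sim\text{Bernoulli}(\mu_a)$ and $\hat{\mu}_{a_n}$ the sample mean of the previously observed rewards for arm $a$. Exploiting that $r_n$ is independent of those past rewards, $\E[e_n^2]=\text{Var}(r_n)+\text{Var}(\hat{\mu}_{a_n})=\mu_a(1-\mu_a)\big(1+1/m\big)$, where $m$ is the pull count before the current step; dividing by $n_{a_n}$ (which equals $m$ up to an additive constant) again yields $\mu_a(1-\mu_a)/n_a+O(n_a^{-2})$. If instead the convention folds $r_n$ into $\hat{\mu}_{a_n}$, the same variance computation with the adjusted weights produces the identical leading term. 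Subtracting the two expansions then gives $\E[\hat{V}_{a_n}]-\E[e_n^2/n_{a_n}]=O(n_a^{-2})$, which is exactly the claim.

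I expect the only genuine work to be the bookkeeping rather than any deep idea: pinning down the exact timing conventions (whether $\hat{\mu}_{a_n}$ is formed from $m$ or $m+1$ rewards, and whether $n_{a_n}$ in the denominator is pre- or post-increment), and then expanding the rational function $\E[\hat{V}_a]$ carefully enough to confirm that the subleading term is truly $O(n_a^{-2})$ and not $O(n_a^{-1})$. The crux is the cancellation of the common leading term $n_a^2\mu_a(1-\mu_a)$ between the two sides, so I would verify that cancellation explicitly before trusting the order of the remainder.
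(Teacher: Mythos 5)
Your proposal is correct and follows essentially the same route as the paper: both expectations are expanded to the common leading term $\mu_a(1-\mu_a)/n_a$ with $O(n_{a}^{-2})$ remainders (the SAU side via the decomposition $r_n-\hat{\mu}_{a_n}=(r_n-\mu_{a_n})+(\mu_{a_n}-\hat{\mu}_{a_n})$, the TS side via binomial moments of the posterior parameters) and then compared. The only cosmetic difference is that the paper organizes the TS-side calculation through the identity $\hat{V}_{a}=\hat{\mu}_{a}(1-\hat{\mu}_{a})/(n_{a}+1)$ rather than expanding $\E[\alpha_{a}\beta_{a}]$ directly, and your handling of the prior constants and timing conventions is if anything more explicit than the paper's.
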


\begin{proof}
Proof of Proposition \ref{prop:ts_vs_sau} is provided in Appendix \ref{proof:ts_vs_sau}.
\end{proof}

Proposition \ref{prop:ts_vs_sau} says that SAU asymptotically approximates TS for Bernoulli bandits, despite not needing to assume a prior and update a posterior distribution over parameters.
In Appendix \ref{sec:empirical-armed} we support this empirically by showing that in multi-armed bandits SAU rivals TS. 

The following proposition further characterizes the prediction residual:

\begin{proposition}\label{lemma-bern}
For Bernoulli bandits the expectation of the prediction residual used in SAU satisfies
    \begin{equation*} 
    \E[e_n^2/n_{a_n}]=\E[(r_n-\hat{\mu}_{a_n})^2/n_{a_n}]
    =\E\left[(\hat{\mu}_{a_n}-\mu_{a_n})^2\right] + O\left(n_{a_n}^{-2}\right).
    \end{equation*}
\end{proposition}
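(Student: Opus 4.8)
The plan is to expand the squared residual around the true mean and evaluate the three resulting expectations using only elementary properties of Bernoulli variables. Throughout I would fix the arm and abbreviate $a=a_n$, $\mu=\mu_{a}$, $\hat\mu=\hat\mu_{a}$ and $m=n_{a}$, working conditionally on the event that arm $a$ has been pulled $m$ times, so that $\hat\mu=\frac1m\sum_{i\in\mathbb{T}_{a}}r_i$ is the sample mean of $m$ draws from $\mathrm{Bernoulli}(\mu)$.

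First I would write the residual as a sum of a ``fresh-noise'' part and an ``estimation-error'' part,
\begin{equation*}
  e_n = r_n-\hat\mu = (r_n-\mu)-(\hat\mu-\mu),
\end{equation*}
and expand the square to obtain
\begin{equation*}
  \E[e_n^2] = \E[(r_n-\mu)^2] - 2\,\E[(r_n-\mu)(\hat\mu-\mu)] + \E[(\hat\mu-\mu)^2].
\end{equation*}
The first term is just the Bernoulli variance $\mu(1-\mu)$, and the third term is the mean-squared error of the sample mean, $\E[(\hat\mu-\mu)^2]=\mu(1-\mu)/m$, which is exactly the quantity appearing on the right-hand side of the claim.

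The crux is the cross term. If the reward $r_n$ used in the residual is not itself one of the $m$ samples entering $\hat\mu$, then conditional independence gives $\E[(r_n-\mu)(\hat\mu-\mu)]=0$ and the cross term vanishes; if instead $r_n$ is included in $\hat\mu$, I would isolate its contribution by writing $\hat\mu=\frac1m\big((m-1)\hat\mu_{-n}+r_n\big)$, where $\hat\mu_{-n}$ is the average of the other $m-1$ rewards, so that $e_n=\frac{m-1}{m}(r_n-\hat\mu_{-n})$; the cross term is then $O(m^{-1})$ and, after the overall $1/m$ factor, contributes only at order $m^{-2}$. Either way, collecting terms gives $\E[e_n^2]=\mu(1-\mu)\big(1+O(m^{-1})\big)$.

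Finally I would divide by $m=n_{a}$ to get
\begin{equation*}
  \E[e_n^2/n_{a}] = \frac{\mu(1-\mu)}{m} + O(m^{-2}) = \E[(\hat\mu-\mu)^2] + O(n_{a}^{-2}),
\end{equation*}
and then average over the (random) value of the count $n_{a}$ to recover the unconditional statement. The main obstacle I anticipate is not the algebra but the dependence structure induced by adaptive sampling: in a bandit the decision to pull arm $a$ depends on past rewards, so some care is needed to justify treating the $n_{a}$ rewards as i.i.d.\ $\mathrm{Bernoulli}(\mu)$ and to control the cross term uniformly. Conditioning on the realized count and arm-selection sequence, rather than on the rewards themselves, is the device I would use to make the elementary variance identities legitimate.
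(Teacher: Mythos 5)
Your proposal is correct and follows essentially the same route as the paper's proof: decompose $e_n=(r_n-\mu_{a_n})-(\hat\mu_{a_n}-\mu_{a_n})$, identify the Bernoulli variance $\mu_{a_n}(1-\mu_{a_n})$ and the sample-mean MSE $\mu_{a_n}(1-\mu_{a_n})/n_{a_n}$, and absorb the remaining terms into $O(n_{a_n}^{-2})$ after dividing by $n_{a_n}$. Your explicit treatment of the cross term (and of whether $r_n$ enters $\hat\mu_{a_n}$) is a detail the paper leaves implicit, so your write-up is if anything slightly more careful than the original.
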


\begin{proof}
Proof of Proposition \ref{lemma-bern} is provided in Appendix \ref{proof:lemma-bern}.
\end{proof}

Proposition \ref{lemma-bern} says that the prediction residual $e_n=r_n-\hat{\mu}_{a_n}$ is an approximately unbiased estimator of the mean squared error $\E\left[(\hat{\mu}_{a_n}-\mu_{a_n})^2\right]$. 
This means that for Bernoulli bandits, SAU closely approximates the uncertainty of the action-value estimates.

Armed with this characterization of the prediction residual $r_n-\hat{\mu}_{a_n}$ in Proposition \ref{lemma-bern}, we now quantify the performance of the estimator $\tau_a^2$ in eq.~\eqref{eqn:var} in terms of its concentration around its expectation:

\begin{proposition}\label{lemma-tau-bound}
For $\delta\in\left[2\exp\left(-\sigma_a^2n_a/(32c)\right), 1\right)$, where $\sigma_a^2$ is the variance of $r_j$ for $j\in \mathbb{T}_{a}$ and $c$ a constant, we have
\begin{align*}
    \Pr\left\{\left|\tau_a^2-\E\left[\tau_a^2\right]\right|\geq \sigma_a\sqrt{8c/(n_a \log (\delta/2))}\right\}
    \leq&\delta,
\end{align*}
\end{proposition}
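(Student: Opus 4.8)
The plan is to treat this as a Bernstein-type concentration inequality for the empirical average $\tau_a^2 = n_a^{-1}\sum_{n\in\mathbb{T}_a} e_n^2$ of the squared residuals, reading the displayed claim as the statement that the deviation $|\tau_a^2 - \E[\tau_a^2]|$ \emph{exceeds} $\sigma_a\sqrt{8c/(n_a(-\log(\delta/2)))}$ with probability at most $\delta$. The first step is simply to center and decompose: write $\tau_a^2 - \E[\tau_a^2] = n_a^{-1}\sum_{n\in\mathbb{T}_a}\bigl(e_n^2 - \E[e_n^2]\bigr)$, so that the quantity of interest becomes an average of $n_a$ centered, bounded terms.

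Second, I would control the range and variance of the summands. For Bernoulli rewards $r_n\in\{0,1\}$ and $\hat\mu_{n,a}\in[0,1]$ the residual $e_n = r_n - \hat\mu_{n,a}$ lies in $[-1,1]$, so each $e_n^2\in[0,1]$ and every centered term is bounded by $1$ in absolute value. For the variance proxy, $e_n^2\in[0,1]$ gives $e_n^4\le e_n^2$, so the variance of each summand is at most $\E[e_n^2]$; by Proposition \ref{lemma-bern} together with $\E[(\hat\mu_{a_n}-\mu_{a_n})^2]=\sigma_a^2/n_a$ for the Bernoulli sample mean, this yields $\E[e_n^2]=\sigma_a^2+O(n_a^{-1})$. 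Hence the total variance of the sum is of order $n_a\sigma_a^2$, which is precisely what generates the $\sigma_a$ factor in the final bound.

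The third and most delicate step is the dependence structure: the residuals are not independent, since each $\hat\mu_{n,a}$ is estimated online from the past rewards of arm $a$, and the index set $\mathbb{T}_a$ is itself policy-dependent. The clean route is to reindex by the $j$-th pull of arm $a$, take $\mathcal{F}_j$ to be the sigma-algebra generated by the first $j$ rewards of that arm, and verify that $e_j^2 - \E[e_j^2\mid\mathcal{F}_{j-1}]$ is a bounded martingale-difference sequence whose predictable quadratic variation is of order $n_a\sigma_a^2$. A martingale (Freedman-type) Bernstein inequality then gives, after dividing by $n_a$,
\begin{equation*}
	\Pr\left\{\left|\tau_a^2 - \E[\tau_a^2]\right|\ge t\right\}\le 2\exp\left(-\frac{n_a t^2}{2(\sigma_a^2 + t/3)}\right).
\end{equation*}

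Finally I would invert this tail. The hypothesis $\delta\ge 2\exp(-\sigma_a^2 n_a/(32c))$ is exactly the condition that keeps $t$ in the sub-Gaussian regime $t\lesssim\sigma_a^2$, where the linear term $t/3$ is dominated by $\sigma_a^2$ and can be absorbed into the constant $c$; in that regime the exponent simplifies to $-n_a t^2/(c'\sigma_a^2)$, and setting the right-hand side equal to $\delta$ and solving for $t$ returns $t=\sigma_a\sqrt{c'\log(2/\delta)/n_a}$, matching the stated form with $c'=8c$. I expect the main obstacle to be step three: setting up the filtration so that the martingale-difference property genuinely holds despite $\hat\mu_{n,a}$ being estimated from the same arm's history, and confirming that the conditional variances are uniformly controlled by $\sigma_a^2$. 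Once that is in place, the remaining inversion of the Bernstein tail — and the verification that the lower bound on $\delta$ keeps us in the quadratic regime — is routine.
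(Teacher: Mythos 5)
Your reading of the (garbled) statement is the right one — the event should be a deviation of at least $\sigma_a\sqrt{8c\log(2/\delta)/n_a}$, and the role of the lower bound on $\delta$ is indeed to keep the deviation in the sub-Gaussian regime. However, your route is genuinely different from the paper's, and step three as sketched has a real gap that your own displayed inequality papers over. A Freedman/Bernstein martingale bound applied to the differences $e_j^2-\E[e_j^2\mid\mathcal{F}_{j-1}]$ controls the deviation of $\tau_a^2$ from the \emph{random} quantity $n_a^{-1}\sum_j\E[e_j^2\mid\mathcal{F}_{j-1}]$, not from $\E[\tau_a^2]$. These two centerings differ: writing $\hat\mu_{j-1}$ for the running mean of arm $a$ after $j-1$ pulls, one has $\E[e_j^2\mid\mathcal{F}_{j-1}]=\sigma_a^2+(\hat\mu_{j-1}-\mu_a)^2$, so your argument leaves over the term
\begin{equation*}
	n_a^{-1}\sum_{j}\left[(\hat\mu_{j-1}-\mu_a)^2-\E\left[(\hat\mu_{j-1}-\mu_a)^2\right]\right],
\end{equation*}
which is itself a dependent quadratic functional of the rewards and needs its own concentration argument (it is lower order, roughly $\sqrt{\log n_a}/n_a$, but establishing that is essentially the same kind of problem you started with). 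Relatedly, the predictable quadratic variation you invoke is not deterministically of order $n_a\sigma_a^2$; it contains the same random sum $\sum_j(\hat\mu_{j-1}-\mu_a)^2$, so Freedman must be applied on an event where that sum is controlled, with the complement bounded separately. So the proposal, as written, does not yet yield the displayed tail bound; it is fixable, but the fix is the hard part you deferred.

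The paper avoids this issue entirely by an algebraic device: stacking the rewards of arm $a$ into $\mathbf{r}_{n_a}$ and the running-mean operators into $\mathbf{Q}_{n_a}$, it writes $\tau_a^2=n_a^{-1}\mathbf{r}_{n_a}^{\top}\mathbf{P}_{n_a}\mathbf{r}_{n_a}$ with $\mathbf{P}_{n_a}=(\mathbf{I}-\mathbf{Q}_{n_a})^{\top}(\mathbf{I}-\mathbf{Q}_{n_a})$, and uses $(\mathbf{I}-\mathbf{Q}_{n_a})\mathbf{1}_{n_a}=\mathbf{0}$ to replace $\mathbf{r}_{n_a}$ by the centered i.i.d.\ vector $\mathbf{r}_{n_a}-\mu_a\mathbf{1}_{n_a}$ at no cost. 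At that point $\tau_a^2$ is an exact quadratic form in independent centered bounded variables, and a Hanson--Wright inequality (Theorem 2.5 of \cite{RA:15}, for vectors with the convex concentration property) gives concentration around the true unconditional expectation directly, with the bounds $\|\mathbf{P}_{n_a}\|_F^2<n_a$ and $\|\mathbf{P}_{n_a}\|_F^2/\|\mathbf{P}_{n_a}\|\geq n_a/4$ playing the role of your range/variance estimates, and the restriction on $\delta$ selecting the quadratic branch of the Hanson--Wright minimum, exactly as in your final inversion step. If you want to salvage your martingale route, you must either add a second concentration argument for the leftover centering term above, or recenter the statement you prove at $\sum_j\E[e_j^2\mid\mathcal{F}_{j-1}]$ and then show that this random centering is itself within the claimed radius of $\E[\tau_a^2]$ with probability $1-O(\delta)$.
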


\begin{proof}
Proof of Proposition \ref{lemma-tau-bound} is provided in Appendix \ref{proof:bound}.
\end{proof}

Proposition \ref{lemma-tau-bound} says that 
$\tau_a^2$ is concentrated around its expectation, and thus remains stable as it is being updated. 
In Appendix \ref{sec:expectation-tau} we also show that $\E\left[\tau_a^2\right]\rightarrow \sigma_a^2$ as $n_a\rightarrow \infty$, and in Appendix \ref{sec:MAB_regret} we derive an upper bound on the expected regrets of \textsf{SAU-UCB} and \textsf{SAU-Sampling} in multi-armed bandits proving that the optimal logarithmic regrets are achievable uniformly over time, which says that the theoretical performance of SAU rivals TS in multi-armed bandits.

\subsection{SAU in Linear Contextual Bandits: Theoretical analysis}
\label{sec:linear}

We now show that the results in Proposition \ref{lemma-bern} also hold for another important bandit model beside Bernoulli bandits, i.e.\ \emph{linear contextual bandits} defined by the following outcome model:
\begin{equation}\label{eqn:linear}
    r_n=\mbf{x}_{n}^{\top}\sbf{\theta}_{a}+\epsilon_{n,a}, \ \ n=1, 2, \ldots,  
\end{equation}
where $\mbf{x}_{n},\sbf{\theta}_{a}\in \mathbb{R}^p$, and $\epsilon_{n,a}$ are iid random variables with variance $\sigma_a^2$.
Assume action $a$ was selected $n_{a}$ times.
We obtain the least-squares estimator $\sbf{\hat{\theta}}_{n,a_n}=(\sum\nolimits_{j\in\mathbb{T}_{n,a_n}}\mbf{x}_{j}^{\top}\mbf{x}_{j})^{-1}(\sum\nolimits_{j\in\mathbb{T}_{n,a_n}}\mbf{x}_{j}^{\top}r_j)$. 
Accordingly, the prediction and the prediction residual at step $n$ are, respectively,
\begin{align}\label{eqn:sau-linear}
\hat{\mu}_{n,a_n}=\mbf{x}_{n}^{\top}\sbf{\hat{\theta}}_{n,a_n} \quad\text{and}\quad e_n^2=(r_n-\mbf{x}_{n}^{\top}\sbf{\hat{\theta}}_{n,a_n})^2. 
\end{align} 
Denote
$h_n=\mbf{x}_{n}^{\top}(\sum\nolimits_{j\in\mathbb{T}_{n,a_n}}\mbf{x}_{j}^{\top}\mbf{x}_{j})^{-1}\mbf{x}_{n}$.
The mean squared error of $\mbf{x}_{n}^{\top}\sbf{\hat{\theta}}_{n,a_n}$ is 
$\text{MSE}_{n}=\E[(\mbf{x}_{n}^{\top}\sbf{\hat{\theta}}_{n,a_n}-\mbf{x}_{n}^{\top}\sbf{\theta}_{a_n})^2]$. 
With direct calculation we see that $\text{MSE}_{n}=h_n\sigma_{a_n}^2$ and that $\E\left[e_n^2/n_{a_n}\right]=(1-h_n)\sigma_{a_n}^2/n_{a_n}$.
Therefore, we have the following proposition:

\begin{proposition}
\label{lemma-linear}
For linear contextual bandits \eqref{eqn:linear} we have that
    \begin{equation*}
        \E[e_n^2/n_{a_n}]=(h_n n_{a_n})^{-1}(1-h_n)~\emph{MSE}_{n}.
    \end{equation*}
Furthermore, assuming that there exist constants $c_1$ and $c_2$ so that $c_1/n_{a_n}\leq h_n\leq c_2/n_{a_n}$, then  
$$c_2^{-1}(1-c_2/n_{a_n})~\emph{MSE}_{n} \leq \E\left[e^2_n/n_{a_n}\right]\leq c_1^{-1}(1-c_1/n_{a_n})~\emph{MSE}_{n}.$$
\end{proposition}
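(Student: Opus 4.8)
The plan is to treat as given the two identities the text has just established by direct calculation, namely $\text{MSE}_n = h_n\sigma_{a_n}^2$ and $\E[e_n^2/n_{a_n}] = (1-h_n)\sigma_{a_n}^2/n_{a_n}$, and to obtain both the exact identity and the two-sided bound from them by elimination followed by an elementary monotonicity argument. No further probabilistic input is required beyond these two relations; the proposition itself is essentially algebra once they are in hand.

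For the equality I would use the first identity to solve $\sigma_{a_n}^2 = \text{MSE}_n/h_n$ (legitimate since $h_n = \mbf{x}_n^\top(\sum_j \mbf{x}_j\mbf{x}_j^\top)^{-1}\mbf{x}_n > 0$ whenever $\mbf{x}_n\neq\mbf{0}$ and the design matrix is invertible), and substitute it into the second identity:
\begin{equation*}
	\E[e_n^2/n_{a_n}] = \frac{1-h_n}{n_{a_n}}\,\sigma_{a_n}^2 = \frac{1-h_n}{h_n n_{a_n}}\,\text{MSE}_n = (h_n n_{a_n})^{-1}(1-h_n)\,\text{MSE}_n,
\end{equation*}
which is exactly the claimed expression.

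For the bound I would isolate the prefactor multiplying $\text{MSE}_n$ and study it as a function of the leverage, writing $g(h) := (1-h)/(h\,n_{a_n}) = n_{a_n}^{-1}\bigl(h^{-1}-1\bigr)$. Since $g'(h) = -n_{a_n}^{-1}h^{-2} < 0$, the map $g$ is strictly decreasing, so under the hypothesis $c_1/n_{a_n}\le h_n\le c_2/n_{a_n}$ it is largest at the left endpoint and smallest at the right endpoint. Evaluating $g(c_1/n_{a_n}) = c_1^{-1}(1-c_1/n_{a_n})$ and $g(c_2/n_{a_n}) = c_2^{-1}(1-c_2/n_{a_n})$ and multiplying through by the nonnegative factor $\text{MSE}_n$ yields the stated sandwich. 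The implicit requirement $c_2<n_{a_n}$ keeps $h_n<1$ and hence all three factors positive, so the inequalities preserve their direction.

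The genuine content sits upstream, in the two preliminary identities I am assuming, which follow from standard hat-matrix algebra for least squares. Conditioning on the realized contexts one writes $\sbf{\hat\theta}_{n,a_n}-\sbf{\theta}_{a_n} = (X^\top X)^{-1}X^\top\sbf{\epsilon}$ with $X$ the stacked design for action $a_n$, so that $\hat\mu_{n,a_n}-\mbf{x}_n^\top\sbf{\theta}_{a_n} = (H\sbf{\epsilon})_n$ and, by idempotency of $H = X(X^\top X)^{-1}X^\top$, $\text{MSE}_n = \sigma_{a_n}^2 H_{nn} = h_n\sigma_{a_n}^2$; the in-sample residual $e_n = ((I-H)\sbf{\epsilon})_n$ then has $\E[e_n^2] = (1-H_{nn})\sigma_{a_n}^2 = (1-h_n)\sigma_{a_n}^2$, which matches the $(1-h_n)$ form used in the statement. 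I expect this hat-matrix bookkeeping (and in particular the in-sample interpretation of the residual that produces $1-h_n$ rather than $1+h_n$) to be the only nontrivial step; granting it, the proposition reduces to the substitution and the monotonicity of $g$ above.
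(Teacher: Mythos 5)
Your proof is correct and takes essentially the same route as the paper: the paper likewise reduces the proposition to the two identities $\text{MSE}_n = h_n\sigma_{a_n}^2$ and $\E[e_n^2/n_{a_n}] = (1-h_n)\sigma_{a_n}^2/n_{a_n}$ (stated inline as following "by direct calculation"), after which the equality is substitution and the sandwich follows from the assumed bounds on $h_n$. You actually supply more detail than the paper does, by spelling out the hat-matrix derivation of the two identities (including the in-sample $1-h_n$ factor) and the monotonicity argument for the endpoint evaluation.
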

Proposition \ref{lemma-linear} provides a lower and an upper bound for
$\E\left[e_n^2/n_{a_n}\right]$ in terms of $\text{MSE}_{n}$, meaning that on average SAU is a conservative measure of the uncertainty around $\mbf{x}_{n}^{\top}\sbf{\hat{\theta}}_{n,a_n}$. 
Noting that $0\leq h_j\leq 1$ and $\sum\nolimits_{j\in\mathbb{T}_{n,a_n}}h_j=p$, 
the assumption that $c_1/n_{a_n}\leq h_n\leq c_2/n_{a_n}$ requires that $h_n$ does not dominate or is dominated by other terms $h_j$, with $j\in\mathbb{T}_{n,a_n}$,
meaning that contexts should be ``homogeneous'' to a certain extent.
To examine the robustness to violations of this assumption, in the simulation in Appendix \ref{appendix:LIN} we empirically test the performance under a heavy-tailed $t$-distribution with $df=2$.
The results show that SAU works robustly even under such type of context inhomogeneity.

\subsection{SAU in Linear Contextual Bandits: Empirical evaluation on synthetic data}
\label{sec:syn}

In this section, we present simulation results quantifying the performance of our SAU-based exploration algorithms in linear contextual bandits. 
We evaluate SAU on synthetically generated datasets to address two questions: (1) How does SAU’s performance compare against Thompson Sampling?, 
and (2) How robust is SAU in various parameter regimes?

\begin{figure}[ht]
\centering
\includegraphics[keepaspectratio,width=0.99\columnwidth]{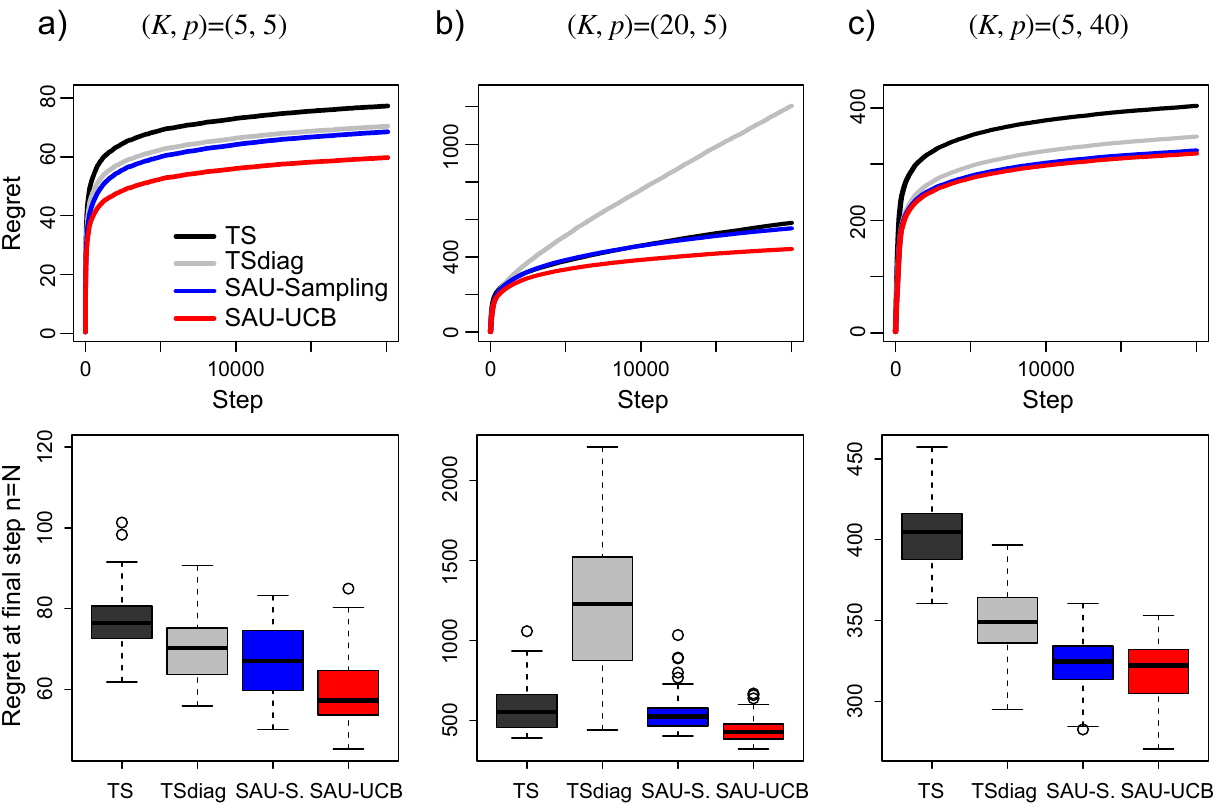}
\caption{Performance on contextual linear bandits with various $(K,p)$ parameters showing that our models (SAU-Sampling and SAU-UCB) consistently achieve lower regret than TS using Bayesian linear regression with exact posterior inference (TS) and TS with \emph{PrecisionDiag} approximation (TSdiag).
The upper panels report regret as a function of step $n$, where results are averaged over 100 runs.
The lower panels show the distributions of the regret at the final step. 
}
\label{fig:SAU}
\end{figure}

We consider three scenarios for $K$ (the number of actions) and $p$ (the context dimensionality): 
(a) $K=5, p=5$, (b) $K=20, p=5$, and (b) $K=5, p=40$. 
The horizon is $N=20000$ steps.
For each action $a$, parameters $\sbf{\theta}_a$ are drawn from a uniform distribution in $[-1,1]$, then normalized so that $\|\sbf{\theta}_a\|=1$. 
Next, at each step $n$ context $\mbf{x}_n$ is sampled from a Gaussian distribution $\mathcal{N}(\sbf{0}_p, \mbf{I}_p)$. 
Finally, we set the noise variance to be $\sigma^2=0.5^2$ so that the signal-to-noise ratio equals 4. 

We compare our SAU-based exploration algorithms, \textsf{SAU-UCB} and \textsf{SAU-Sampling} to Thompson Sampling (``TS'' in Fig.\ \ref{fig:SAU}).
For TS on linear model, we follow \cite{Riquelme:18} and use Bayesian linear regression for exact posterior inference. 
We also consider the \emph{PrecisionDiag} approximation for the posterior covariance matrix of $\sbf{\theta}_a$ with the same priors as in \cite{Riquelme:18} (``TSdiag'' in Fig.\ \ref{fig:SAU}).

Fig.\ \ref{fig:SAU}a) shows regret as a function of step for $(K,p)=(5,5)$. 
From the figure we have two observations: \textsf{SAU-Sampling} is comparable to TS, and \textsf{SAU-UCB} achieves better regret than TS.
In terms of cumulative regret SAU significantly outperforms TS and TSdiag. 
Figures\ \ref{fig:SAU}b) and c) show the effects of larger $K$ and $p$, respectively.
The observations from Fig.\ \ref{fig:SAU}a) still hold in these cases, implying that SAU's performance is robust to an increase in action space and context dimension.

We also consider four other cases: (1) the elements of $\sbf{\theta}_a$ are sampled from $\mathcal{N}(0,1)$ then are normalized; (2) the model errors are correlated with $AR(1)$ covariance structure with correlation $\rho=0.5$; (3) the elements in $\mbf{x}_i$ are correlated with $AR(1)$ covariance structure with correlation $\rho=0.5$; and (4) the elements of $\mbf{x}_i$ are sampled from a heavy-tailed $t$-distribution with $df=2$ and are truncated at 5.  
These results are shown in Appendix \ref{appendix:LIN} and are consistent with the results in Fig.\ \ref{fig:SAU} confirming SAU's robustness to various contextual linear bandit problems.

\section{Deep Contextual Bandits}
\label{sec:sau-deep}

\subsection{Deep Bayesian Bandit Algorithms}

Deep contextual bandits refers to tackling contextual bandits by parameterizing the action-value function as a deep neural network $\mu(\mbf{x},\sbf{\theta})$, thereby leveraging models that have been very successful in the large-scale supervised learning \cite{Lecun2015} and RL \cite{Mnih:15}.
Notice that in the deep setting we denote all parameters with $\sbf{\theta}=\{\sbf{\theta}_a\}_{a\in \mathbb{K}}$, as common in the neural network literature.
In particular, $\sbf{\theta}$ includes the parameters that are shared across actions, as well as those of the last layer of the network which are specific to each action $a$.
Algorithm \ref{alg:generic-bandit} breaks down a generic deep contextual bandit algorithm in terms of an API exposing its basic subroutines: \textsc{Predict} (which outputs the set of action-values $\{\mu_{n,a}\}_{a\in \mathbb{K}}$ given the observation $\mbf{x}_{n}$), \textsc{Action} (which selects an action given all the action-values), and \textsc{Update} (which updates model parameters at the and of the step).

\begin{algorithm}[th]
\begin{algorithmic}[1]

\For { $n=1,2, \ldots$ }
\State Observe context $\mbf{x}_{n}$;
\State Compute values $\{\mu_{n,a}\}_{a\in \mathbb{K}}=\textsc{Predict}(\mbf{x}_{n})$;
\State Choose $a_n=\textsc{Action}(\{\mu_{n,a}\}_{a\in \mathbb{K}})$, observe reward $r_n$;
\State $\textsc{update}\left(r_n, a_n, \mbf{x}_{n}\right)$;
\EndFor
\end{algorithmic}

\caption{Generic Deep Contextual Bandit algorithm}
\label{alg:generic-bandit}
\end{algorithm}

In this scheme Thompson Sampling (TS) is implemented as in Algorithm \ref{alg:ts}, which underlines where TS promotes exploration by sampling from a distribution over model parameters $P_n(\sbf{\theta})$.
In principle this provides an elegant Bayesian approach to tackle the exploration-exploitation dilemma embodied by contextual bandits.
Unfortunately, representing and updating a posterior distribution over model parameters $P_n(\sbf{\theta})$ exactly becomes intractable for complex models such as deep neural networks.

\begin{algorithm}[th]
\begin{algorithmic}[1]
\algnotext{EndFunction}

\Function{Predict}{$\mbf{x}_{n}$}
    \State \textbf{Exploration:}
    \underline{Sample model parameters from posterior distribution: $\sbf{\hat{\theta}}_n \sim P_n(\sbf{\theta})$;}
    \State {\bf Return} predicted values $\{\hat{\mu}_{n,a}\}_{a\in \mathbb{K}} = \mu(\mbf{x}_{n},\sbf{\hat{\theta}}_n)$, where 
\EndFunction
\vspace{0.1cm}

\Function{Action}{$\{\hat{\mu}_{n,a}\}_{a\in \mathbb{K}}$}
    \State {\bf Return} $a_n = \arg\max_a(\{\tilde{\mu}_{n,a}\}_{a\in\mathbb{K}})$;
\EndFunction
\vspace{0.1cm}

\Function{Update}{$r_n, a_n, \mbf{x}_{n}$}
    \State Use triplet $(r_n, a_n, \mbf{x}_{n})$ to update posterior distribution: $P_{n+1}(\sbf{\theta})\leftarrow P_{n}(\sbf{\theta})$;
\EndFunction

\end{algorithmic}

\caption{Thompson Sampling for Deep Contextual Bandits}
\label{alg:ts}
\end{algorithm}

To obviate this problem, several techniques that heuristically approximate posterior sampling have emerged, such as randomly perturbing network parameters \cite{Plappert2017, Fortunato:18, GG:16}, or bootstrapped sampling \cite{Osband:16}.
Within the scheme of Algorithm \ref{alg:generic-bandit} the role of random perturbation and bootstrapped sampling are to heuristically emulate the model sampling procedure promoting exploration in the \textsc{Predict} subroutine (see TS Algorithm \ref{alg:ts}).
However, systematic empirical comparisons recently demonstrated that simple strategies such as epsilon-greedy \cite{Mnih:15, Schaul:16} and Bayesian linear regression \cite{Chu2011} remain very competitive compared to these approximate posterior sampling methods in deep contextual bandit.
In particular, \cite{Riquelme:18} showed that linear models where the posterior can be computed exactly, and epsilon-greedy action selection overwhelmingly outrank deep methods with approximate posterior sampling in a suite of contextual bandit benchmarks based on real-world data.

\subsection{SAU for Deep Contextual Bandits}

We now re-examine the deep contextual bandits benchmarks in \cite{Riquelme:18} and show that SAU can be seamlessly combined with deep neural networks, resulting in an exploration strategy whose performance is competitive with the best deep contextual bandit algorithms identified by \cite{Riquelme:18}.

Algorithm \ref{alg:sau-deep} shows the deep contextual bandit implementation of SAU.
Notice that the \textsc{Predict} subroutine is remarkably simple, consisting merely in the forward step of the deep neural network value prediction model.
In contrast to our extremely simple procedure, TS-based methods require at this step to (approximately) sample from the model posterior to implement exploration.
In SAU exploration is instead taken care of by the \textsc{Action} subroutine, which takes the values as inputs and either explores through sampling from a distribution around the predicted values (\textsf{SAU-Sampling}) or through an exploration bonus added to them (\textsf{SAU-UCB}).
SAU then selects the action corresponding to the maximum of these perturbed values.
The \textsc{Update} for SAU is also quite simple, and consists in updating the neural network parameters to minimize the reward prediction error loss $l_n$ following action selection using SGD via backprop, or possibly its mini-batch version (which would then be carried out on a batch of  $(r_n, a_n, \mbf{x}_{n})$ triplets previously stored in a memory buffer).
\textsc{Update} then updates the count and the SAU measure $\tau_{a_n}$ for the selected action $a_n$.

\begin{algorithm}[th]
\begin{algorithmic}[1]
\algnotext{EndFunction}

\Function{Predict}{$\mbf{x}_{n}$}
    \State {\bf Return} predicted values $\{\hat{\mu}_{n,a}\}_{a\in \mathbb{K}} = \mu(\mbf{x}_{n},\sbf{\hat{\theta}}_n)$;
\EndFunction
\vspace{0.1cm}

\Function{Action}{$\{\hat{\mu}_{n,a}\}_{a\in \mathbb{K}}$}
    \State \textbf{Exploration:} Compute \quad $\widetilde{\mu}_{n,a}\sim \mathcal{N}\left(\hat{\mu}_{n,a}, \tau_{a}^2/n_{a}\right)$ \qquad\quad (\textsf{SAU-Sampling})
    \State \hspace{2.95cm} or \quad$\widetilde{\mu}_{n,a} = \hat{\mu}_{n,a}+\sqrt{\tau_{a}\log n / n_{a}}$\quad~(\textsf{SAU-UCB});
    \State {\bf Return} $a_n = \arg\max_a(\{\tilde{\mu}_{n,a}\}_{a\in\mathbb{K}})$;
\EndFunction
\vspace{0.1cm}

\Function{Update}{$r_n, a_n, \mbf{x}_{n}$}
    \State Compute prediction error $e_n=r_n - \hat{\mu}_{n,a_n}$ and loss $l_n=\frac{1}{2}(r_n - \hat{\mu}_{n,a_n})^2$
    \State Update model parameters to $\sbf{\hat{\theta}}_{n+1}$ using SGD with gradients $\frac{\partial l_n}{\partial \sbf{\theta}}$ (or mini-batch version);
    \State Update exploration parameters: $n_{a_n}\leftarrow n_{a_n} + 1$, \quad $S_{a_n}^2 \leftarrow  S_{a_n}^2+e_n^2$ \quad $\tau_{a_n}^2=S_{a_n}^2/n_{a_n}$;
\EndFunction

\end{algorithmic}

\caption{SAU for Deep Contextual Bandits (SAU-Neural-Sampling and UCB)}
\label{alg:sau-deep}
\end{algorithm}

We notice that the simplicity of SAU for deep contextual bandits is akin to the simplicity of epsilon-greedy, for which exploration is also implemented in the \textsc{Action} subroutine (see Algorithms \ref{alg:epsilon-greedy-deep} in Appendix \ref{appendix:additional}).
In fact, comparing the two algorithms it is clear that SAU can be used as a \emph{drop-in replacement for epsilon-greedy exploration}, making it widely applicable.

\subsection{Empirical Evaluation of SAU on Deep Contextual Bandit Problems}

{\bf Benchmarks and baseline algorithms.}
Our empirical evaluation of SAU's performance in the deep contextual bandit setting is based on the experiments by \cite{Riquelme:18}, who benchmarked the main TS-based approximate posterior sampling methods over a series of contextual bandit problems.
We test SAU on the same contextual bandit problems against 4 competing algorithms consisting in the 4 best ranking algorithms identified by \cite{Riquelme:18}, which are: \emph{LinearPosterior} (a closed-form Bayesian linear regression algorithm for exact posterior inference under the assumption of a linear contextual bandit \cite{Bishop2006}), \emph{LinearGreedy} (epsilon-greedy exploration under the assumption of a linear contextual bandit), \emph{NeuralLinear} (Bayesian linear regression on top of the last layer of a neural network trained with SGD \cite{Snoek2015}) and \emph{NeuralGreedy} (a neural network with epsilon-greedy exploration trained with SGD).
We neglected a direct comparison with NeuralUCB \cite{Zhou:20}, since its scaling in memory and computational requirements make it quickly impractical for even moderately sized applications of practical interest.
Moreover, its reported performance is substantially worse than SAU-UCB.

{\bf Implementations of SAU.}
We implemented and tested 4 versions of SAU on the benchmarks in \cite{Riquelme:18}. In the Tables below we refer to them a follows: \emph{Linear-SAU-S} and \emph{Linear-SAU-UCB} refer to a linear regression model using SAU-Sampling and SAU-UCB as exploration strategies, respectively.
\emph{Neural-SAU-S} and \emph{Neural-SAU-UCB} refer to a neural network model trained with SGD using SAU-Sampling and SAU-UCB, respectively.

{\bf Empirical evaluation on the Wheel Bandit.}
The \emph{Wheel Bandit Problem} is a synthetic bandit designed by \cite{Riquelme:18} to study the performance of bandit algorithms as a function of the need for exploration in the environment by varying a parameter $\delta \in [0, 1]$ that smoothly changes the importance of exploration.
In particular, the difficulty of the problem increases with $\delta$,
since the problem is designed so that for $\delta$ close to 1 most contexts have the same optimal action, while only for a fraction $1 - \delta^2$ of contexts the optimal action is a different more rewarding action (see \cite{Riquelme:18} for more details).
In Appendix \ref{appendix:wheel}, Table \ref{tab:wheel_regret_norm} quantifies the performance of \textsf{SAU-Sampling} and \textsf{SAU-UCB} in terms of cumulative regret in comparison to the 4 competing algorithms, and normalized to the performance of the \emph{Uniform} baseline, which selects actions uniformly at random.
There we can see that \emph{Neural-SAU-S} is consistently the best algorithm with lower cumulative regret for a wide rage of the parameter $\delta$.
Only for very high values of $\delta$ ($\delta=0.99$) the baseline algorithm \emph{NeuralLiner} starts to overtake it, but even in this case, another variant of SAU, \emph{SAU-Linear-S} still maintains the lead in performance.

{\bf Empirical evaluation on real-world Deep Contextual Bandit problems.}
Table \ref{tab:regret_norm} quantifies the performance of \textsf{SAU-Sampling} and \textsf{SAU-UCB} in comparison to the 4 competing baseline algorithms, and normalized to the performance of the \emph{Uniform} baseline.
These results show that a SAU algorithm is the best algorithm in each of the 7 benchmarks in terms of minimizing cumulative regret over all samples.
\emph{Neural-SAU-S} or \emph{Neural-SAU-UCB} are the best combination 6 out of 7 times, and linear regression with SAU-UCB is the best on the bandit built from the Adult dataset.
\begin{table}
  \caption{Cumulative regret incurred on the contextual bandits in \cite{Riquelme:18} by the 4 best algorithms that they identified (described in Appendix \ref{appendix:datasets}) compared against our SAU-based algorithms.
  Results are relative to the cumulative regret of the Uniform algorithm. We report the mean and standard error of the mean over 50 trials.
  The best mean cumulative regret value for each task is marked in bold.}
  \vspace{0.1cm}
  \centering
    {\fontsize{6.6}{11}\selectfont
    \begin{tabular}{llllllll}
{} &       {\bf Mushroom} &        {\bf Statlog} &      {\bf Covertype} &      {\bf Financial} &         {\bf Jester} &          {\bf Adult} &         {\bf Census} \\
\midrule
LinearPosterior         &    3.02 ± 0.15 &   10.29 ± 0.19 &   36.88 ± 0.07 &   10.77 ± 0.15 &   64.01 ± 0.40 &   75.87 ± 0.06 &   46.70 ± 0.08 \\
LinearGreedy   &    4.64 ± 0.35 &  109.26 ± 0.95 &   53.50 ± 2.00 &   15.48 ± 1.02 &   62.71 ± 0.45 &   86.70 ± 0.24 &   65.35 ± 1.57 \\
NeuralLinear   &    2.66 ± 0.08 &    1.26 ± 0.03 &   29.18 ± 0.07 &   10.16 ± 0.18 &   69.48 ± 0.43 &   78.28 ± 0.07 &   41.00 ± 0.09 \\
NeuralGreedy   &   26.66 ± 0.32 &   40.17 ± 0.46 &   88.85 ± 0.12 &   85.85 ± 1.37 &   93.15 ± 0.77 &   98.96 ± 0.03 &   85.82 ± 0.26 \\
{\bf Linear-SAU-S}   &    4.58 ± 0.39 &   10.44 ± 0.26 &   36.29 ± 0.12 &    8.71 ± 0.52 &   62.59 ± 0.43 &   74.70 ± 0.13 &   39.65 ± 0.11 \\
{\bf Linear-SAU-UCB} &    3.09 ± 0.15 &   10.02 ± 0.22 &   36.77 ± 0.17 &    6.51 ± 0.37 &   64.43 ± 0.40 &   {\bf 74.62 ± 0.07} &   39.98 ± 0.09 \\
{\bf Neural-SAU-S}   &    {\bf 2.20 ± 0.05} &    0.62 ± 0.01 &   {\bf 27.46 ± 0.06} &    5.60 ± 0.11 &   {\bf 61.02 ± 0.57} &   78.02 ± 0.07 &   38.76 ± 0.08 \\
{\bf Neural-SAU-UCB} &    2.32 ± 0.06 &    {\bf 0.60 ± 0.01} &   27.90 ± 0.06 &    {\bf 5.26 ± 0.15} &   62.27 ± 0.61 &   78.09 ± 0.06 &   {\bf 38.54 ± 0.09} \\
Uniform        &  100.00 ± 0.24 &  100.00 ± 0.03 &  100.00 ± 0.02 &  100.00 ± 1.47 &  100.00 ± 0.96 &  100.00 ± 0.02 &  100.00 ± 0.04 \\
\bottomrule
\end{tabular}
}
  \label{tab:regret_norm}
\end{table}
The next best algorithm in terms of minimizing cumulative regret is \emph{NeuralLinear} \cite{Riquelme:18}, which incurs cumulative regret that on average is 32\% higher than \emph{Neural-SAU-S} and 34\% higher than \emph{Neural-SAU-UCB}.

As already mentioned, thanks to their implementation efficiency SAU-based algorithms are much less computation intensive than TS-based algorithms.
This is reflected in the remarkably shorter execution time: on average \emph{Neural-SAU-S} and \emph{Neural-SAU-UCB} run more than 10 time faster than \emph{NeuralLinear} \cite{Riquelme:18} (see Appendix Table \ref{tab:times} for details), also making them extremely scalable.

\section{Conclusion and Discussion}

Existing methods to estimate uncertainty tend to be impractical for complex value function models like deep neural networks, either because exact posterior estimation become unfeasible, or due to how approximate algorithms coupled with deep learning training amplify estimation errors.

In this paper, we have introduced Sample Average Uncertainty (SAU), a simple and efficient uncertainty measure for contextual bandit problems which sidesteps the mentioned problems plaguing Bayesian posterior methods.
SAU only depends on the value prediction, in contrast to methods based on Thompson Sampling that instead require an estimate of the variability of the model parameters.
As a result, SAU is immune to the negative effects that neural network parameterizations and optimization have on the quality of uncertainty estimation, resulting in reliable and robust exploration as demonstrated by our empirical studies.
SAU's implementation simplicity also makes it suitable as a drop-in replacement for epsilon-greedy action selection, resulting in a scalable exploration strategy that can be effortlessly deployed in large-scale and online contextual bandit scenarios.

We also have provided theoretical justifications for SAU-based exploration by connecting SAU with posterior variance and mean-squared error estimation.
However, the reasons why SAU is in practice consistently better than TS-based exploration in deep bandits is still not settled theoretically.
We hypothesize that this might be due to two main reasons: (1) TS-based methods implement exploration by estimating the uncertainty of the internal model parameters, which might introduce estimation errors, while SAU directly estimates uncertainty at the model output; (2) in addition, the approximation error from the approximate posterior implementations of TS-based models might result in inefficient uncertainty measures of the internal model parameters.
Because of the importance of contextual bandit algorithms for practical applications like for instance recommendation and ad servicing systems, we believe that it will be important to further theoretically refine these hypotheses to help mitigate the possible negative societal impacts that could result from deploying inefficient, miscalibrated or biased exploration algorithms.

Another limitation of our work is that it developed SAU-based exploration in the specific and restricted case of bandits.
Despite being an application of interest, we are excited and looking forward to further development that could extend methods based on SAU to more general sequential decision scenarios in RL beyond the bandit setting.


\section*{Acknowledgements}
This work was partially supported by National Natural Science Foundation of China (No.11871459) and by Shanghai Municipal Science and Technology Major Project (No.2018SHZDZX01).

\bibliographystyle{unsrt}
\bibliography{sau-bandits}

\begin{thebibliography}{10}

\bibitem{LR:85}
T.~L. Lai and Herbert Robbins.
\newblock Asymptotically efficient adaptive allocation rules.
\newblock {\em Advances in Applied Mathematics}, 6(1):4--22, 1985.

\bibitem{Auer:02}
Peter Auer, Nicolo Cesa-Bianchi, and Paul Fischer.
\newblock Finite time analysis of the multi-armed bandit problem.
\newblock {\em Machine Learning}, 47:235--256, 2002.

\bibitem{Woodroofe:79}
M.~Woodroofe.
\newblock A one-armed bandit problem with a concomitant variable.
\newblock {\em Journal of the American Statistical Association}, 74:799--806,
  1979.

\bibitem{Auer:03}
P.~Auer.
\newblock Using confidence bounds for exploitation-exploration trade-offs.
\newblock {\em Journal of Machine Learning Research}, 3:397--422, 2003.

\bibitem{LZ:08}
J.~Langford and T.~Zhang.
\newblock The epoch-greedy algorithm for multiarmed bandits with side
  information.
\newblock In {\em Advances in Neural Information Processing Systems},
  volume~20, pages 817--824, 2008.

\bibitem{Dani:08}
Varsha Dani, Thomas Hayes, and Sham Kakade.
\newblock Stochastic linear optimization under bandit feedback.
\newblock In {\em Proceedings of the 21st Annual Conference on Learning
  Theory}, pages 355--366, 2008.

\bibitem{RT:10}
Patt Rusmevichientong and John Tsitsiklis.
\newblock Linearly parameterized bandits.
\newblock {\em Mathematics of Operations Research}, 35(2):395--411, 2010.

\bibitem{APS:11}
Yasin Abbasi-Yadkori, David Pal, and Csaba Szepesv\"{a}ri.
\newblock Improved algorithms for linear stochastic bandits.
\newblock In {\em Advances in Neural Information Processing Systems},
  volume~24, pages 2312--2320, 2011.

\bibitem{Abbasi:12}
Yasin Abbasi-Yadkori.
\newblock {\em Online learning for linearly parametrized control problems}.
\newblock PhD thesis, University of Alberta, Edmonton, AB, Canada., 2012.

\bibitem{PR:13}
V.~Perchet and P.~Rigollet.
\newblock The multi-armed bandit problem with covariates.
\newblock {\em The Annals of Statistics}, 41(2):693--721., 2013.

\bibitem{Slivkins:14}
A.~Slivkins.
\newblock Contextual bandits with similarity information.
\newblock {\em Journal of Machine Learning Research}, 15(1):2533--2568, 2014.

\bibitem{T:33}
William~R. Thompson.
\newblock On the likelihood that one unknown probability exceeds another in
  view of the evidence of two samples.
\newblock {\em Biometrika}, 25:285--294, 1933.

\bibitem{Strens:00}
M.~Strens.
\newblock A bayesian framework for reinforcement learning.
\newblock In {\em Proceedings of the 17th international conference on Machine
  learning}, pages 943--950, 2000.

\bibitem{CL:11}
Olivier Chapelle and Lihong Li.
\newblock An empirical evaluation of {Thompson} sampling.
\newblock In {\em Advances in Neural Information Processing Systems},
  volume~24, 2011.

\bibitem{AG:13}
Shipra Agrawal and Navin Goyal.
\newblock Further optimal regret bounds for {Thompson} sampling.
\newblock In {\em Proceedings of the 16th International Conference on
  Artificial Intelligence and Statistics}, pages 99--107, 2013.

\bibitem{RV:14}
D.~Russo and B.~Van~Roy.
\newblock Learning to optimize via information-directed sampling.
\newblock In {\em Advances in Neural Information Processing Systems},
  volume~27, pages 1583--1591, 2014.

\bibitem{Mnih:15}
V.~Mnih, K.~Kavukcuoglu, D.~Silver, A.A. Rusu, J.~Veness, M.G. Bellemare,
  A.~Graves, M.~Riedmiller, A.K. Fidjeland, G.~Ostrovski, S.~Petersen,
  C.~Beattie, A.~Sadik, I.~Antonoglou, H.~King, D.~Kumaran, D.~Wierstra,
  S.~Legg, and D.~Hassabis.
\newblock Human-level control through deep reinforcement learning.
\newblock {\em Nature}, 518:529--533, 2015.

\bibitem{Riquelme:18}
Carlos Riquelme, George Tucker, and Jasper Snoek.
\newblock Deep {Bayesian} bandits showdown: An empirical comparison of
  {Bayesian} deep networks for {Thompson} sampling.
\newblock In {\em Proceedings of the 6th International Conference on Learning
  Representations}, 2018.

\bibitem{Audibert09}
J.-Y. Audibert, R.~Munosc, and C.~Szepesv\'{a}ri.
\newblock Exploration-exploitation tradeoff using variance estimates in
  multi-armed bandits.
\newblock {\em Theoretical Computer Science}, 410:1876--1902, 2009.

\bibitem{Lecun2015}
Yann LeCun, Yoshua Bengio, and Geoffrey Hinton.
\newblock Deep learning.
\newblock {\em Nature}, 521(7553):436--444, 2015.

\bibitem{Plappert2017}
Matthias Plappert, Rein Houthooft, Prafulla Dhariwal, Szymon Sidor, Richard~Y
  Chen, Xi~Chen, Tamim Asfour, Pieter Abbeel, and Marcin Andrychowicz.
\newblock Parameter space noise for exploration.
\newblock {\em arXiv preprint arXiv:1706.01905}, 2017.

\bibitem{Fortunato:18}
M.~Fortunato, M.G. Azar, B.~Piot, J.~Menick, I~Osband, A.~Graves, R.~Mnih,
  V.~Munos, D.~Hassabis, O.~Pietquin, C.~Blundell, and S.~Legg.
\newblock Noisy networks for exploration.
\newblock In {\em International Conference on Representation Learning}, 2018.

\bibitem{GG:16}
Yarin Gal and Zoubin Ghahramani.
\newblock Dropout as a bayesian approximation: Representing model uncertainty
  in deep learning.
\newblock In {\em International conference on machine learning}, 2016.

\bibitem{Osband:16}
I.~Osband, C.~Blundell, A.~Pritzel, and B.~Van~Roy.
\newblock Deep exploration via bootstrapped {DQN}.
\newblock In {\em Advances in Neural Information Processing Systems}, 2016.

\bibitem{Schaul:16}
T.~Schaul, J.~Quan, I.~Antonoglou, and D.~Silver.
\newblock Prioritized experience replay.
\newblock In {\em International Conference on Learning Representations}, 2016.

\bibitem{Chu2011}
Wei Chu, Lihong Li, Lev Reyzin, and Robert Schapire.
\newblock Contextual bandits with linear payoff functions.
\newblock In {\em Proceedings of the Fourteenth International Conference on
  Artificial Intelligence and Statistics}, pages 208--214. JMLR Workshop and
  Conference Proceedings, 2011.

\bibitem{Bishop2006}
Christopher~M Bishop.
\newblock {\em Pattern recognition and machine learning}.
\newblock springer, 2006.

\bibitem{Snoek2015}
Jasper Snoek, Oren Rippel, Kevin Swersky, Ryan Kiros, Nadathur Satish,
  Narayanan Sundaram, Mostofa Patwary, Mr~Prabhat, and Ryan Adams.
\newblock Scalable bayesian optimization using deep neural networks.
\newblock In {\em International conference on machine learning}, pages
  2171--2180. PMLR, 2015.

\bibitem{Zhou:20}
D.~Zhou, L.~Li, and Q.~Gu.
\newblock Neural contextual bandits with ucb-based exploration.
\newblock In {\em Proceedings of the 37th International Conference on Machine
  Learning}, 2009.

\bibitem{RA:15}
R.~Adamczak.
\newblock A note on the hanson-wright inequality for random vectors with
  dependencies.
\newblock {\em Electronic Communications in Probability}, 20(72):1--13, 2015.

\bibitem{Kveton:18}
Branislav Kveton, Csaba Szepesv\"{a}ri, Sharan Vaswani, Zheng Wen, Mohammad
  Ghavamzadeh, and Tor Lattimore.
\newblock Garbage in, reward out: Bootstrapping exploration in multi-armed
  bandits.
\newblock In {\em Proceedings of the 36rd International Conference on Machine
  Learning}, 2018.

\bibitem{Dua2017}
Dheeru Dua and Casey Graff.
\newblock {UCI} machine learning repository, 2017.

\bibitem{Schlimmer1981}
Jeff Schlimmer.
\newblock Mushroom records drawn from the {A}udubon society field guide to
  {N}orth {A}merican mushrooms.
\newblock {\em GH Lincoff (Pres), New York}, 1981.

\bibitem{Blundell2015}
Charles Blundell, Julien Cornebise, Koray Kavukcuoglu, and Daan Wierstra.
\newblock Weight uncertainty in neural network.
\newblock In {\em International Conference on Machine Learning}, pages
  1613--1622. PMLR, 2015.

\bibitem{Goldberg200}
Ken Goldberg, Theresa Roeder, Dhruv Gupta, and Chris Perkins.
\newblock Eigentaste: A constant time collaborative filtering algorithm.
\newblock {\em information retrieval}, 4(2):133--151, 2001.

\bibitem{Kingma2014}
Diederik~P Kingma and Jimmy Ba.
\newblock Adam: A method for stochastic optimization.
\newblock {\em arXiv preprint arXiv:1412.6980}, 2014.

\end{thebibliography}

\onecolumn
\begin{appendices}

\section{Multi-armed bandits}
\label{appendix:MAB}

\subsection{Thompson Sampling for Beta-Bernoulli bandits}
\label{proof:ts_vs_sau}

Thompson Sampling (TS) in the case of Bernoulli bandits typically assumes that the true values $\mu_{a}$ are sampled from a Beta distribution with parameters $\alpha_a$ and $\beta_a$.
Accordingly, TS samples the values from $\text{Beta}(\alpha_a, \beta_a)$.
This distribution has mean 
$\hat{\mu}_{a}=\alpha_{a}/(\alpha_{a}+\beta_{a})$
and variance 
$\hat{V}_{a}=(\alpha_{a}+\beta_{a})^{-2}(\alpha_{a}+\beta_{a}+1)^{-1}\alpha_{a}\beta_{a}$. 
The mean
$\hat{\mu}_{a}$
represents exploitation, while the posterior variance 
$\hat{V}_{a}$ promotes exploration.

After an action $a$ is selected at step $n$, the posterior over $\mu_a$ is updated by updating the corresponding parameters as
$(\alpha_{a},\beta_{a})\leftarrow (\alpha_{a},\beta_{a})+(r_n,1-r_n)$.

Direct calculation gives that
\begin{equation*}
    \E[\hat{V}_{a}]=\E[\hat{\mu}_{a}(1-\hat{\mu}_{a})/(n_{a}+1)] =\mu_{a}(1-\mu_{a})/n_{a}+O\left(n_{a}^{-2}\right).
\end{equation*}
Comparing this last expression with eq.~\eqref{eq:prop2} proves Proposition \ref{prop:ts_vs_sau}.

\subsection{Proof of Proposition \ref{lemma-bern}}
\label{proof:lemma-bern}

We first note that:
\begin{align}\label{eq:prop2}
    \E[e^2_n/n_{a_n}]
    &=\E\left[(r_n-\hat{\mu}_{a_n})^2/n_{a_n}\right] \notag\\
    &=\E\left[(r_n-\mu_{a_n}+\mu_{a_n}-\hat{\mu}_{a_n})^2\right]/n_{a} 
    =\mu_{a_n}(1-\mu_{a_n})/n_{a_n} + O\left(n_{a_n}^{-2}\right),
\end{align}
where the last step follows by noticing that $\E\left[(r_n-\mu_{a_n})^2\right]=\mu_{a_n}(1-\mu_{a_n})$.

Recalling that $\hat{\mu}_{a}$ are sample averages of Bernoulli variables with means $\mu_{a}$, we note that 
\begin{align*}
    \E\left[(\hat{\mu}_{a}-\mu_{a})^2\right]=\mu_{a}(1-\mu_{a})/n_{a}. 
\end{align*}

Inserting this result for $a=a_n$ in the previous expression results in
\begin{align*}
    \E[e^2_n]/n_{a_n}
    =\E\left[(r_n-\hat{\mu}_{a_n})^2\right]/n_{a_n}
    =\E\left[(\hat{\mu}_{a_n}-\mu_{a_n})^2\right] +  O\left(n_{a_n}^{-2}\right).
\end{align*}
proving Proposition \ref{lemma-bern}.

\subsection{Empirical Studies on Multi-armed bandits}
\label{sec:empirical-armed}

We present a simple synthetic example that simulates Bernoulli rewards and investigates its performance. 
The reward distribution of action $a\in\{1,\ldots, K\}$, $P_{a}=\text{Bernoulli}(\mu_{a})$, is parameterized
by the expected reward $\mu_{a}\in [0,1]$. 
In this simulation, the optimal action has a reward probability of $\mu_1$ and the $K-1$ other actions have a
probability of $\mu_1-\epsilon$. We consider $\mu_1=0.5$, $\epsilon\in \{0.1,0.02\}$ and $K\in\{10,50\}$. 
The horizon is $n=10^5$ rounds. 
Our SAU-based exploration approaches, \textsf{SAU-UCB} and \textsf{SAU-Sampling}, are compared to UCB1, and Thompson Sampling (TS).

Fig.\ \ref{Bernoulli-M1} shows the regret as a function of round in the case $\epsilon=0.1$ and $K=10$. 
From the figure, we have two observations:  (1) \textsf{SAU-Sampling} works similarly as TS;   
(2) \textsf{SAU-UCB} achieves a much smaller regret than UCB1, even work more or less than TS. 
This empirical performance shows the potential of our SAU measure.

We continue to empirically investigate the effects of $K$ and $\epsilon$ (Fig.\ \ref{Bernoulli-M3}), and show the performance in non-Bernoulli payoff (Fig.\ \ref{Bernoulli-M4}). 
We consider a larger number of actions $K=50$ (Fig.\ \ref{Bernoulli-M2}) and a smaller $\epsilon=0.02$ (Fig.\ \ref{Bernoulli-M3}). 
From these figures, the performance of $K=50$ or $\epsilon=0.02$ is consistent with the case $(K, \epsilon)=(10, 0.1)$.  
We also run a non-Bernoulli bandits, uniform bandit, in which a random variable $u$ is from uniform distribution $[0,1]$, then the binary reward $r=1$ if $u\leq \mu_{a}$, $0$ otherwise. 
We plot the result in Fig.\ \ref{Bernoulli-M4}, where the results works consistently as Bernoulli bandits. 
These results show that SAU works robustly to various multi-armed bandit problems. 

\begin{figure*}[!ht]
\centering
\begin{subfigure}[b]{0.245\columnwidth}
\includegraphics[keepaspectratio,width=\columnwidth]{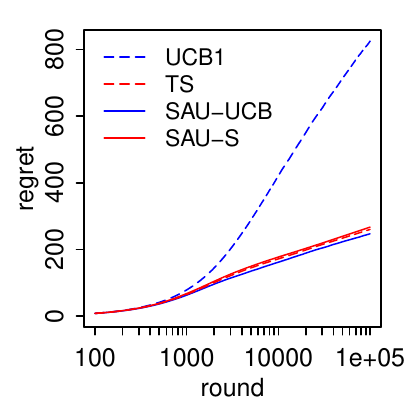}
\includegraphics[keepaspectratio,width=\columnwidth]{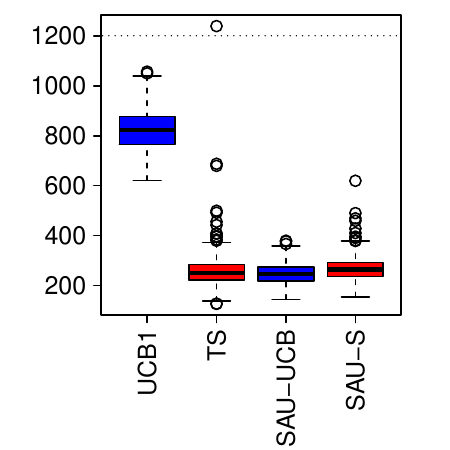}
\subcaption{$(K, \epsilon)=(10, 0.1)$.}
\label{Bernoulli-M1}
\end{subfigure}
\begin{subfigure}[b]{0.245\columnwidth}
\includegraphics[keepaspectratio,width=\columnwidth]{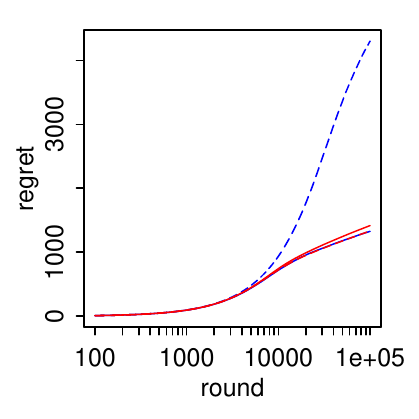}
\includegraphics[keepaspectratio,width=\columnwidth]{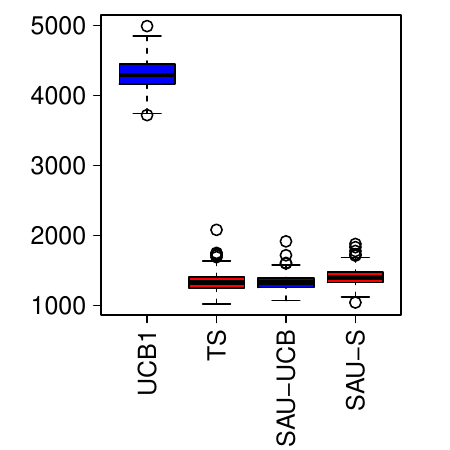}
\subcaption{$(K, \epsilon)=(50, 0.1)$. }
\label{Bernoulli-M2}
\end{subfigure}
\begin{subfigure}[b]{0.245\columnwidth}
\includegraphics[keepaspectratio,width=\columnwidth]{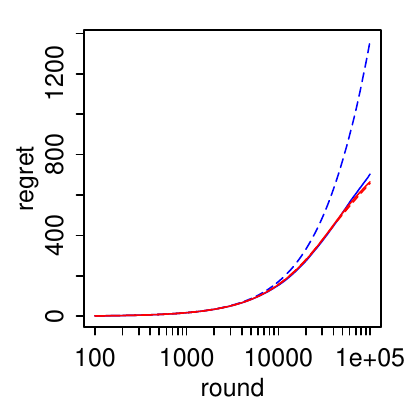}
\includegraphics[keepaspectratio,width=\columnwidth]{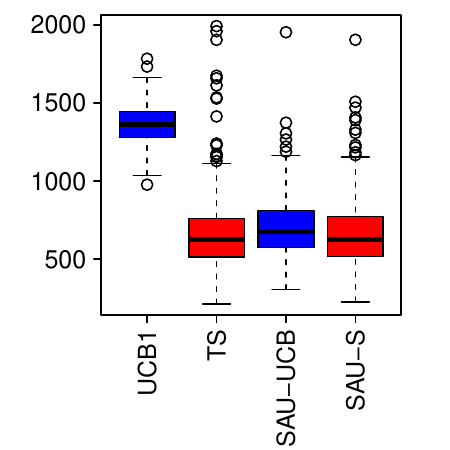}
\subcaption{$(K, \epsilon)=(10, 0.02)$. }
\label{Bernoulli-M3}
\end{subfigure}
\begin{subfigure}[b]{0.245\columnwidth}
\includegraphics[keepaspectratio,width=\columnwidth]{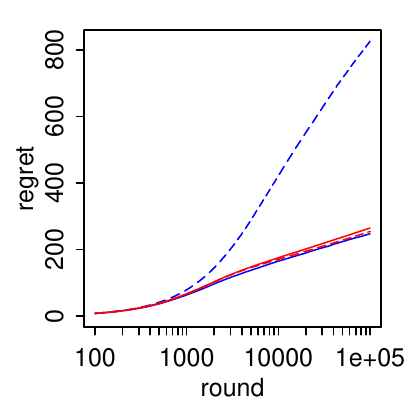}
\includegraphics[keepaspectratio,width=\columnwidth]{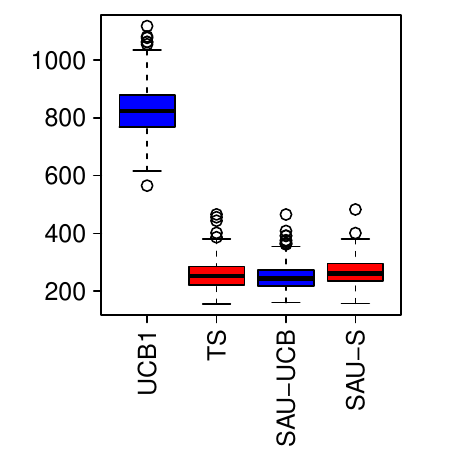}
\subcaption{Non-Bernoulli bandit.}
\label{Bernoulli-M4}
\end{subfigure}
\caption{Performance on 4 multi-armed bandit problems: three Bernoulli bandits (a,b,c) and one non-Bernoulli (uniform) bandit (d). 
Upper panels report regret performance as a function of play $n$ averaged over 500 runs.
Lower panels report regret distribution at the last play.
For Bernoulli bandits rewards are sampled from $\text{Bernoulli}(\mu_{a})$, best action has a reward probability of $\mu$ and the $K-1$ other actions have a probability of $\mu-\epsilon$.
For the uniform bandit the rewards are sampled from $\mathbbm{1}\{U[0,1]<\mu_{a}\}$, where $U[0,1]$ is a random variable from uniform distribution in $[0,1]$, $K=10$ and $\epsilon=0.1$. 
}
\label{fig:bernoulli}
\end{figure*}

\subsection{Proofs of bound Proposition \ref{lemma-tau-bound}}
\label{proof:bound}

Let $\mathbf{r}_{n_a}$ be the sub-vector of $(r_1, r_2, \ldots)^{\top}$ corresponding to the indices in $\mathbb{T}_a$,  
and
$\mathbf{1}_j=(1, \cdots, 1, 0, \cdots, 0)^{\top}$ a vector whose first $j$ components are 1 and all the others are zero. Denote $\mathbf{Q}_{n_a}=(\mathbf{1}_1, 2^{-1}\mathbf{1}_{2}, \cdots, n_a^{-1}\mathbf{1}_{n_a})^{\top}$.
Eqn.~\eqref{eqn:var} then becomes
\begin{equation*}
\tau_a^2
 = n_a^{-1}\sum\nolimits_{j\in\mathbb{T}_a}[r_j-j^{-1}\mathbf{1}_{j}^{\top}\mathbf{r}_{n_a}]^2
 = n_a^{-1}\mathbf{r}_{n_a}^{\top}(\mathbf{I}_{n_a}-\mathbf{Q}_{n_a})^{\top}(\mathbf{I}_{n_a}-\mathbf{Q}_{n_a})\mathbf{r}_{n_a}. 
\end{equation*}

Noting that $(\mathbf{I}_{n_a}-\mathbf{Q}_{n_a})\mathbf{1}_{n_a}=(0,0,\cdots,0)^{\top}$, we can write
\begin{align}\label{Q2}
\tau_a^2=&\frac{1}{n_a}\mathbf{r}_{n_a}^{\top}(\mathbf{I}_{n_a}-\mathbf{Q}_{n_a})^{\top}(\mathbf{I}_{n_a}-\mathbf{Q}_{n_a})\mathbf{r}_{n_a}
\notag\\=&\frac{1}{n_a}(\mathbf{r}_{n_a}-\mu_a\mathbf{1}_{n_a})^{\top}(\mathbf{I}_{n_a}-\mathbf{Q}_{n_a})^{\top}(\mathbf{I}_{n_a}-\mathbf{Q}_{n_a})(\mathbf{r}_{n_a}-\mu_a\mathbf{1}_{n_a})\notag\\
&+\frac{2\mu_a}{n_a}(\mathbf{r}_{n_a}-\mu_a\mathbf{1}_{n_a})^{\top}(\mathbf{I}_{n_a}-\mathbf{Q}_{n_a})^{\top}(\mathbf{I}_{n_a}-\mathbf{Q}_{n_a})\mathbf{1}_{n_a}-\frac{\mu_a^2}{n_a}\mathbf{1}_{n_a}^{\top}(\mathbf{I}_{n_a}-\mathbf{Q}_{n_a})^{\top}(\mathbf{I}_{n_a}-\mathbf{Q}_{n_a})\mathbf{1}_{n_a}\notag\\
=&\frac{1}{n_a}(\mathbf{r}_{n_a}-\mu_a\mathbf{1}_{n_a})^{\top}(\mathbf{I}_{n_a}-\mathbf{Q}_{n_a})^{\top}(\mathbf{I}_{n_a}-\mathbf{Q}_{n_a})(\mathbf{r}_{n_a}-\mu_a\mathbf{1}_{n_a})\notag\\
=:&\tau_{a;e}^2.
\end{align}

Denote $\mathbf{P}_{n_a}=(\mathbf{I}_{n_a}-\mathbf{Q}_{n_a})^{\top}(\mathbf{I}_{n_a}-\mathbf{Q}_{n_a})$. 
By applying Theorem 2.5 in \cite{RA:15}(Lemma \ref{HW-lemma} below), we have that 
for $t\leq\sigma_a^2n_a/4$, 
\begin{align}\label{HW}
\text{Pr}\left\{n_a|\tau_{a;e}^2-\E[\tau_{a;e}^2]|\geq t\right\}
\leq&2\exp\left(-\frac{1}{8c}\min\left\{\frac{t^2}{\|\mathbf{P}_{n_a}\|_F^2\sigma_a^2}, \frac{t}{\|\mathbf{P}_{n_a}\|}\right\}\right)\notag\\
\leq&2\exp\left(-\frac{1}{8c}\frac{t^2}{\|\mathbf{P}_{n_a}\|_F^2\sigma_a^2}\right)\notag\\
\leq&2\exp\left(-\frac{t^2}{8cn_a\sigma_a^2}\right),
\end{align}
where $c$ is some constant, the 2nd step is from that $\frac{t^2}{\|\mathbf{P}_{n_a}\|_F^2\sigma_a^2}<\frac{t}{\|\mathbf{P}_{n_a}\|}$ when $t\leq\sigma_a^2n_a/4$, due to $\|\mathbf{P}_{n_a}\|_F^2/\|\mathbf{P}_{n_a}\|\geq n_a/4$ for $n_a\geq2$, and the last step is from $\|\mathbf{P}_{n_a}\|_F^2 < n_a$. 
Denoting $\delta=2\exp\left(-\frac{t^2}{8cn_a\sigma_a^2}\right)$, 
we have that for $\delta\in\left[2\exp\left(-\frac{n_a\sigma_a^2}{32c}\right), 1\right)$,
\begin{align*}
\text{Pr}\left\{|\tau_{a;e}^2-\E[\tau_{a;e}^2]|\geq \sqrt{\frac{8c\sigma_a^2\log (\delta/2)^{-1}}{n_a}}\right\}
\leq&\delta. 
\end{align*}

\subsection{Two lemmas}
Motivated by the analysis on the regret bounds for Thompson Sampling in \cite{AG:13}, \cite{Kveton:18} proved the following lemma which supplies an upper bound of the expected $n$-round regret under sampling distribution in general. 
\begin{lemma}\label{general}
Let the history of action $a$ after $n_a$ plays be a vector $H_{a,n_a}$ of length $n_a$. 
without loss of generality we assume that action 1 is optimal. 
Let $Y\sim dist(H_{a,n_a})$, where $dist(H_{a,n_a})$ is a sampling distribution depending on $H_{a,n_a}$. 
Define for $\eta\in \mathbb{R}$ 
\begin{equation*}
Q_{a,n_a}(\eta)=\emph{Pr}(Y\geq \eta | H_{a,n_a}). 
\end{equation*}
For $\eta\in \mathbb{R}$, the expected $n$-round regret can be bounded from above as 
\begin{equation*}
R(n)\leq \sum\limits_{a: \mu_a<\mu_*}\Delta_a(R_{a}^{(1)}+R_{a}^{(2)}), 
\end{equation*}
where 
$$R_{a}^{(1)}=\sum\limits_{n_1=0}^{n-1}\E\left[\min \left\{\frac{1}{Q_{1,n_1}(\eta)}-1, n\right\}\right] \text{ and } R_{a}^{(2)}=\sum\limits_{n_a=0}^{n-1}\emph{Pr}[Q_{a,n_a}(\eta)>1/n]+1.$$
\end{lemma}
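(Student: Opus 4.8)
The plan is to follow the now-standard decomposition for sampling-based exploration pioneered by \cite{AG:13}. First I would write the expected regret in terms of per-arm pull counts: since the instantaneous regret at step $t$ is $\mu_*-\mu_{a_t}$, linearity of expectation gives $R(n)=\sum_{a:\mu_a<\mu_*}\Delta_a\,\E[N_a(n)]$, where $\Delta_a=\mu_*-\mu_a$ and $N_a(n)=\sum_{t=1}^n\mathbbm{1}\{a_t=a\}$ counts the pulls of suboptimal arm $a$. It therefore suffices to show $\E[N_a(n)]\leq R_a^{(1)}+R_a^{(2)}$ for each suboptimal $a$.

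Next I would fix the threshold $\eta$ and split each pull of $a$ according to whether the optimal arm is ``under-sampled'' or $a$ is ``over-sampled''. Because $a_t=a$ forces $\widetilde{\mu}_{n,a}\geq\widetilde{\mu}_{n,1}$, whenever $\widetilde{\mu}_{n,1}\geq\eta$ we also have $\widetilde{\mu}_{n,a}\geq\eta$; hence
\[
\mathbbm{1}\{a_t=a\}\leq\mathbbm{1}\{a_t=a,\ \widetilde{\mu}_{n,1}<\eta\}+\mathbbm{1}\{a_t=a,\ \widetilde{\mu}_{n,a}\geq\eta\}.
\]
The second (over-sampling) term is the easy one and yields $R_a^{(2)}$: conditioning on the history up to $t$, I would split further on whether $Q_{a,N_a}(\eta)>1/n$. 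For each value $j$ of the pull count with $Q_{a,j}(\eta)>1/n$ there is at most one round that pulls $a$ with exactly $j$ prior pulls, contributing $\sum_{j=0}^{n-1}\Pr[Q_{a,j}(\eta)>1/n]$ in expectation; on the complementary rounds the conditional probability of $\{\widetilde{\mu}_{n,a}\geq\eta\}$ equals $Q_{a,N_a}(\eta)\leq1/n$, so over at most $n$ rounds these contribute at most $1$ in expectation. Summing gives exactly $R_a^{(2)}$.

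The heart of the argument, and the step I expect to be the main obstacle, is bounding the under-sampling term by $R_a^{(1)}$. The key ingredient is the inequality of \cite{AG:13}: conditioned on the history $\mathcal{F}_{t-1}$, with $q=Q_{1,N_1(t-1)}(\eta)$,
\[
\Pr\{a_t\neq1,\ \widetilde{\mu}_{n,1}<\eta\mid\mathcal{F}_{t-1}\}\leq\frac{1-q}{q}\,\Pr\{a_t=1,\ \widetilde{\mu}_{n,1}\geq\eta\mid\mathcal{F}_{t-1}\}.
\]
Establishing this is the delicate part: it uses that the per-round samples are drawn independently across arms given $\mathcal{F}_{t-1}$, so that writing $M=\max_{a'\neq1}\widetilde{\mu}_{n,a'}$ one can condition on $M$ and compare the conditional mass of $\widetilde{\mu}_{n,1}$ on $\{\widetilde{\mu}_{n,1}<\eta\}$ against that on $\{\widetilde{\mu}_{n,1}\geq\eta\}$ relative to the selection event $\{\widetilde{\mu}_{n,1}\geq M\}$ for arm $1$.

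Once this inequality is in hand, I would bound $\mathbbm{1}\{a_t=a,\ \widetilde{\mu}_{n,1}<\eta\}\leq\mathbbm{1}\{a_t\neq1,\ \widetilde{\mu}_{n,1}<\eta\}$, sum over $t$, and telescope over the pulls of the optimal arm: since there is exactly one round with $a_t=1$ for each prior optimal-pull count $n_1$, at which $q=Q_{1,n_1}(\eta)$, the right-hand side collapses (after dropping the factors $\mathbbm{1}\{\widetilde{\mu}_{n,1}\geq\eta\}\leq1$) to $\sum_{n_1=0}^{n-1}\bigl(1/Q_{1,n_1}(\eta)-1\bigr)$ inside the expectation, while the crude bound that no phase can contribute more than $n$ suboptimal pulls supplies the $\min\{\cdot,n\}$ truncation. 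Taking expectations yields $R_a^{(1)}$, and combining the two terms and re-summing over $a$ with weights $\Delta_a$ completes the proof.
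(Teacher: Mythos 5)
First, a point of comparison: the paper itself does not prove Lemma \ref{general} --- it is quoted from \cite{Kveton:18}, whose proof follows the Thompson Sampling analysis of \cite{AG:13}. Your proposal reconstructs exactly that skeleton (regret as $\sum_{a}\Delta_a\E[N_a(n)]$, an under-/over-sampling split at $\eta$, a per-round comparison inequality, telescoping over the optimal arm's pull count), and your treatment of the over-sampling term yielding $R_a^{(2)}$ is correct. However, the step you yourself call the heart of the argument contains a genuine error: the comparison inequality is false as you state it. You place the threshold event on the \emph{optimal} arm's sample and claim
\[
\Pr\{a_t\neq 1,\ \widetilde{\mu}_{n,1}<\eta \mid \mathcal{F}_{t-1}\}\ \leq\ \tfrac{1-q}{q}\,\Pr\{a_t=1,\ \widetilde{\mu}_{n,1}\geq\eta \mid \mathcal{F}_{t-1}\},
\qquad q=Q_{1,N_1(t-1)}(\eta).
\]
Counterexample: take $K=2$, suppose $q\in(0,1)$, and let arm $2$'s sampling distribution given $\mathcal{F}_{t-1}$ be a point mass at a value $m>\eta$ with $\Pr\{\widetilde{\mu}_{n,1}\geq m\mid\mathcal{F}_{t-1}\}=0$. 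Then arm $2$ is chosen almost surely, so the left-hand side equals $\Pr\{\widetilde{\mu}_{n,1}<\eta\mid\mathcal{F}_{t-1}\}=1-q>0$ while the right-hand side is $0$. The same phenomenon breaks your proposed proof by conditioning on $M=\max_{a'\neq1}\widetilde{\mu}_{n,a'}$: on $\{M\geq\eta\}$ the left conditional mass is $1-q$ but the right one is $\Pr\{\widetilde{\mu}_{n,1}\geq M\mid\mathcal{F}_{t-1}\}\leq q$, and no $(1-q)/q$ ratio survives integration over $M$.

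The fix --- and this is what Lemma 1 of \cite{AG:13} and the proof in \cite{Kveton:18} actually do --- is to place the threshold event on the \emph{suboptimal} arm's sample, using the exact partition $\mathbbm{1}\{a_t=a\}=\mathbbm{1}\{a_t=a,\widetilde{\mu}_{n,a}<\eta\}+\mathbbm{1}\{a_t=a,\widetilde{\mu}_{n,a}\geq\eta\}$ in place of your overlapping split on $\widetilde{\mu}_{n,1}$. The event $\{a_t=a,\ \widetilde{\mu}_{n,a}<\eta\}$ forces \emph{every} sample below $\eta$, since the chosen arm's sample is the maximum; hence, by conditional independence of the samples across arms, with $F=\Pr\{\widetilde{\mu}_{n,a'}<\eta\ \forall a'\neq1\mid\mathcal{F}_{t-1}\}$ one gets
\[
\Pr\{a_t=a,\ \widetilde{\mu}_{n,a}<\eta\mid\mathcal{F}_{t-1}\}\leq(1-q)\,F
\quad\text{and}\quad
\Pr\{a_t=1\mid\mathcal{F}_{t-1}\}\geq q\,F,
\]
which yields the valid inequality $\Pr\{a_t=a,\ \widetilde{\mu}_{n,a}<\eta\mid\mathcal{F}_{t-1}\}\leq\frac{1-q}{q}\Pr\{a_t=1\mid\mathcal{F}_{t-1}\}$. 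Note that this bound, unlike yours, can only be invoked on rounds where the under-sampling event involves the arm actually pulled, which is exactly what the corrected partition provides. With this substitution your telescoping over $n_1$ and the truncation at $n$ (whence the $\min\{\cdot,n\}$ in $R_a^{(1)}$) go through, and the remainder of your argument, including the $R_a^{(2)}$ bound, stands.
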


The following result is provide by \cite{RA:15}. 
\begin{lemma}\label{HW-lemma}
Let $X$ be a mean zero random vector in $\mathbb{R}^n$. If $X$ has the convex concentration property with constant $K$, 
then for any $n\times n$ matrix $A$ and every $t>0$, 
$$\text{Pr}\left(|X^{\top}AX-E(X^{\top}AX)|\geq t\right)\leq 
2\exp\left(-\frac{1}{CK^2}\min\left\{\frac{t^2}{\|A\|_F^2\|Cov(X)\|},\frac{t}{\|A\|}\right\}\right)$$
for some universal constant $C$. 
\end{lemma}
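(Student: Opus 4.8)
The plan is to prove exactly the stated Hanson--Wright-type inequality, recalling that $X$ has the convex concentration property (CCP) with constant $K$ means that for every $1$-Lipschitz convex $\varphi:\mathbb{R}^n\to\mathbb{R}$ one has $\Pr(|\varphi(X)-\E\varphi(X)|\ge t)\le 2\exp(-t^2/K^2)$. First I would reduce to a symmetric positive semidefinite $A$. Since $X^\top A X=X^\top\tfrac{A+A^\top}{2}X$ and symmetrization does not increase either $\|\cdot\|_F$ or $\|\cdot\|$, I may take $A=A^\top$; writing $A=A_+-A_-$ for its positive and negative spectral parts (both PSD, with $\|A_\pm\|\le\|A\|$ and $\|A_\pm\|_F\le\|A\|_F$) and applying a union bound to $X^\top A_+X-\E[\cdot]$ and $X^\top A_-X-\E[\cdot]$ separately, it suffices to establish the two-sided tail for a single PSD $A$. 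Throughout write $\Sigma=\mathrm{Cov}(X)$ and note $\E[X^\top A X]=\mathrm{tr}(A\Sigma)$ since $X$ is centered.

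The core step is an exponential-moment bound for $W:=X^\top A X-\mathrm{tr}(A\Sigma)$ obtained by Gaussian linearization. For $\lambda>0$ and PSD $A$ the identity $\exp(\lambda x^\top A x)=\E_{g}\exp(\sqrt{2\lambda}\,g^\top A^{1/2}x)$ with $g\sim\mathcal{N}(\mathbf{0},\mathbf{I})$ gives, after Fubini, $\E_X\exp(\lambda X^\top A X)=\E_g\,\E_X\exp(\sqrt{2\lambda}\,(A^{1/2}g)^\top X)$. For each fixed $g$ the inner expectation is the moment generating function of the linear functional $x\mapsto(A^{1/2}g)^\top x$, which is convex and Lipschitz with constant $\|A^{1/2}g\|$, so the CCP applies. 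The delicate point (see below) is that I must use a refined, Bernstein-type form of this bound whose leading quadratic coefficient is the genuine second moment $\tfrac12\theta^2(A^{1/2}g)^\top\Sigma(A^{1/2}g)$ rather than the crude sub-Gaussian proxy $\theta^2 K^2\|A^{1/2}g\|^2$; with $\theta=\sqrt{2\lambda}$ this makes the mean come out exactly as $\mathrm{tr}(A\Sigma)$, and the CCP constant $K$ is relegated to controlling the remainder and the admissible range of $\lambda$.

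With the refined inner bound in hand, the outer expectation reduces to a Gaussian chaos moment generating function $\E_g\exp(\lambda\,g^\top \Sigma^{1/2}A\Sigma^{1/2}g+\text{remainder})$, computed in closed form as $\prod_i(1-2\lambda\nu_i)^{-1/2}$ with $\nu_i$ the eigenvalues of $\Sigma^{1/2}A\Sigma^{1/2}$. Expanding $-\tfrac12\log(1-x)\le\tfrac12 x+x^2$ for $x\le\tfrac12$ and using $\sum_i\nu_i=\mathrm{tr}(A\Sigma)$ together with $\sum_i\nu_i^2=\mathrm{tr}(\Sigma A\Sigma A)\le\|\Sigma\|^2\|A\|_F^2$ yields, for $\lambda\le c_0/(K^2\|A\|)$, a bound of the form $\E\exp(\lambda W)\le\exp(C\lambda^2\,\|\Sigma\|^2\|A\|_F^2)$. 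Since the CCP with constant $K$ forces $\|\Sigma\|\le CK^2$, one factor of $\|\Sigma\|$ may be traded for $K^2$, producing the variance proxy $K^2\|\Sigma\|\|A\|_F^2$ that matches the statement.

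Finally a Chernoff bound $\Pr(W\ge t)\le\exp(-\lambda t+CK^2\|\Sigma\|\|A\|_F^2\,\lambda^2)$, optimized over $\lambda\in(0,\,c_0/(K^2\|A\|)]$, produces the two regimes: the interior optimum $\lambda^\ast\propto t/(K^2\|\Sigma\|\|A\|_F^2)$ governs small $t$ and gives the rate $t^2/(K^2\|A\|_F^2\|\Sigma\|)$, whereas for large $t$ the optimum hits the boundary $\lambda=c_0/(K^2\|A\|)$ and gives $t/(K^2\|A\|)$; together these are the stated minimum. Running the same argument with $-\lambda$ (and with $-A$) yields the two-sided tail and the prefactor $2$, and reassembling the $A_+,A_-$ pieces only affects the universal constant $C$. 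I expect the main obstacle to be precisely the refined inner moment-generating-function estimate of the second paragraph: a blunt sub-Gaussian bound would replace the mean $\mathrm{tr}(A\Sigma)$ by $K^2\mathrm{tr}(A)$ and the variance by a trace-type quantity such as $\mathrm{tr}(A)\|A\|$, which is genuinely weaker than the Frobenius term $\|A\|_F^2$ required here. Obtaining the sharp coefficients forces a separation of the true second moment from the sub-Gaussian remainder, and because $g$ is unbounded in the linearization, the range in which this refinement is valid must be handled carefully, e.g.\ by truncating $g$ or by a direct moment expansion of the inner linear functional.
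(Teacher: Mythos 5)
First, note that the paper contains no proof of this lemma: it is quoted verbatim as Theorem 2.5 of the cited reference [RA:15] (``The following result is provide by \cite{RA:15}''), so there is no in-paper argument to compare against and your attempt must stand on its own. It does not: the step you yourself flag as the delicate point --- a refined Bernstein-type bound $\E\exp\bigl(\theta\langle u,X\rangle\bigr)\leq\exp\bigl(\tfrac{\theta^2}{2}\,u^{\top}\Sigma u\,(1+O(\theta K\|u\|))\bigr)$ valid throughout the range $|\theta|\leq c/(K\|u\|)$ --- is false under the convex concentration property alone. Take $n=1$ and $X=\pm M$ with probability $\epsilon$ each, $0$ otherwise: this variable satisfies CCP with $K\asymp M/\sqrt{\log(1/\epsilon)}$ and has variance $2\epsilon M^2$, yet at $\theta=c/K$ the true moment generating function is $1+2\epsilon\bigl(\cosh(c\sqrt{\log(1/\epsilon)})-1\bigr)\approx 1+\epsilon^{1-c/\sqrt{\log(1/\epsilon)}}$, which exceeds the claimed bound $\exp\bigl(O(\epsilon\log(1/\epsilon))\bigr)$ by an unbounded factor as $\epsilon\to 0$. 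Under CCP the honest Bernstein remainder is $C|\theta|^3K^3\|u\|^3$, cubic in the sub-Gaussian scale rather than proportional to the true variance; the variance-led form you need holds for bounded variables (Bennett) but boundedness is precisely what CCP does not give.

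Propagating the honest remainder through your Gaussian linearization is then fatal, not merely delicate. Truncating $g$ at the largest admissible level $\|A^{1/2}g\|\leq c/(K\sqrt{\lambda})$, the best conversion of the cubic term is $\lambda^{3/2}K^3\|A^{1/2}g\|^3\leq c\,\lambda K^2\,g^{\top}Ag$, and the Gaussian average of this injects an irreducible linear-in-$\lambda$ term $c\,\lambda K^2\,\mathrm{tr}(A)$ into the exponent --- a mean shift of order $K^2\,\mathrm{tr}(A)$, which is dimension-dependent. For $A=\mathbf{I}_n$ and isotropic $X$ your route is therefore vacuous for all $t\lesssim K^2 n$, whereas the lemma is already nontrivial at $t\asymp K^2\sqrt{n}$. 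This trace/diagonal obstruction is exactly what makes Hanson--Wright hard without independence: for independent coordinates Rudelson--Vershynin dispose of the diagonal separately and decouple the off-diagonal part, tools unavailable under bare CCP, and it is why the cited source establishes the result by a genuinely different argument rather than by linearizing the moment generating function. The peripheral parts of your sketch (symmetrization, the $A_+,A_-$ split with union bound, $\|\Sigma\|\leq CK^2$ from sub-Gaussianity of linear forms, and the Gaussian chaos computation with $\sum_i\nu_i=\mathrm{tr}(A\Sigma)$ and $\sum_i\nu_i^2\leq\|\Sigma\|^2\|A\|_F^2$) are fine, but the proof cannot be completed along the proposed lines.
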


\subsection{Derivation of $\text{E}(\tau_a^2)$}
\label{sec:expectation-tau}

By simple calculation, 
\begin{align*}
\text{trace}(\mathbf{P}_{n_a})
& = \text{trace}\left(\mathbf{I}_n-2\mathbf{Q}_{n_a}+\mathbf{Q}_{n_a}^{\top}\mathbf{Q}_{n_a}\right)\notag\\
& = n_a-(1+1/2+\cdots+1/n_a).
\end{align*}
Noting $\text{E}(\tau_a^2)=n_a^{-1}\sigma_a^2\text{trace}(\mathbf{P}_{n_a})$ and $\log(n_a+1)<1+1/2+\cdots+1/n_a\leq 1+\log(n_a)$,  
we have that $\sigma_a^2\left[1-\frac{\log (n_a)}{n_a}\right]\leq\text{E}(\tau_a^2)<\sigma_a^2\left[1-\frac{1+\log (1+n_a)}{n_a}\right]$. 
It follows that as $n_a\rightarrow\infty$,
$$\text{E}(\tau_a^2)-\sigma_a^2\rightarrow 0.$$

\subsection{Expected Regret Analysis}
\label{sec:MAB_regret}
In this section, we derive an upper bound on the expected regret of the algorithms in Algorithm \ref{alg:bandit-sau} showing that the optimal logarithmic regrets are achievable uniformly over time.
Define $\Delta_a=\mu_*-\mu_a,$
where, we recall that $\mu_a$ is the expected reward for action $a$ and 
$\mu_*=\max\{\mu_1, \cdots, \mu_K\}$.

Consider the algorithm \textsf{SAU-Sampling}.
Thanks to the fact that $Y_{a}$ in Algorithm \ref{alg:bandit-sau} is sampled from a Gaussian, we can derive an upper bound on the expected $n$-round regret of Algorithm \ref{alg:bandit-sau} by applying the Gaussian tail bound. 
\begin{theorem}\label{them-sampling}
If \textsf{SAU-Sampling} is run on a $K$-armed bandit problem ($K\geq 2$) having arbitrary reward distribution $P_1, \cdots, P_K$ with support in $[0,1]$, 
then its expected regret after any number $n\geq \max\{\frac{24\log n}{\min_a\Delta_a^2}, 4\}$ of plays is at most  
$$\sum\limits_{a: \mu_a<\mu_*}\Delta_a\left(\frac{96\log n}{\Delta_a^2}+6\right).$$
\end{theorem}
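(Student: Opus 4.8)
The plan is to invoke the general regret decomposition of Lemma \ref{general}, since \textsf{SAU-Sampling} draws $\widetilde\mu_{n,a}$ from the Gaussian $\mathcal N(\hat\mu_{n,a},\tau_a^2/n_a)$, so its per-arm sampling distribution $dist(H_{a,n_a})$ is exactly Gaussian and $Q_{a,n_a}(\eta)$ is a Gaussian upper-tail probability. For each suboptimal arm $a$ I would fix the threshold $\eta=\eta_a:=\mu_a+\Delta_a/2$, the midpoint between $\mu_a$ and $\mu_*$, so that $\mu_*-\eta_a=\eta_a-\mu_a=\Delta_a/2$. A key simplification available here is that, because the rewards lie in $[0,1]$, every residual satisfies $|e_j|=|r_j-\hat\mu|\le 1$, hence $\tau_a^2=n_a^{-1}\sum_{j}e_j^2\le 1$ deterministically; the sampling standard deviation is therefore bounded by $1/\sqrt{n_a}$, which lets me replace the random scale by a fixed one in all tail estimates. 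It then remains to bound the two sums $R_a^{(1)}$ and $R_a^{(2)}$ produced by Lemma \ref{general}.

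For $R_a^{(2)}=\sum_{n_a=0}^{n-1}\Pr[Q_{a,n_a}(\eta_a)>1/n]+1$ I would argue as follows. Using the Gaussian tail bound $\Pr(Z\ge z)\le e^{-z^2/2}$ together with $\tau_a\le 1$, the event $Q_{a,n_a}(\eta_a)>1/n$ forces $\hat\mu_a>\eta_a-\sqrt{2\log n/n_a}$, i.e.\ $\hat\mu_a-\mu_a>\Delta_a/2-\sqrt{2\log n/n_a}$. I would split the sum at a threshold $L_a=O(\log n/\Delta_a^2)$: for $n_a<L_a$ I bound each probability trivially by $1$, contributing the leading $O(\log n/\Delta_a^2)$ term; for $n_a\ge L_a$ the slack $\sqrt{2\log n/n_a}$ is at most $\Delta_a/4$, so Hoeffding's inequality gives $\Pr(\hat\mu_a-\mu_a>\Delta_a/4)\le e^{-n_a\Delta_a^2/8}$, and summing this geometric-type tail contributes only an $O(1)$ (in fact $o(1)$) amount. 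Optimizing the split point yields the stated coefficient $96\log n/\Delta_a^2$ on the leading term.

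The delicate part is $R_a^{(1)}=\sum_{n_1=0}^{n-1}\E[\min\{Q_{1,n_1}(\eta_a)^{-1}-1,\,n\}]$, which must be shown to be a constant independent of $n$ and $\Delta_a$ (the source of the additive ``$+6$''). Here $Q_{1,n_1}(\eta_a)$ is the probability that the \emph{optimal} arm's Gaussian sample exceeds $\eta_a<\mu_*$. When $n_1$ is large, $\hat\mu_1$ concentrates above $\eta_a$ by an amount $\gtrsim\Delta_a/2$, and Gaussian anti-concentration (a lower bound on the upper tail) makes $Q_{1,n_1}\ge 1/2$, so $Q_{1,n_1}^{-1}-1$ is $O(1)$ per term with an exponentially small exceptional contribution. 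The genuine obstacle is small $n_1$, and more subtly the interaction with the random scale $\tau_1$: if $\hat\mu_1$ underestimates $\mu_*$ while $\tau_1$ happens to be small, the sampling Gaussian is tightly concentrated below $\eta_a$ and $Q_{1,n_1}^{-1}$ can be as large as the cap $n$. I would control this exactly as in the Gaussian-TS analysis of \cite{AG:13, Kveton:18}: condition on the empirical-mean process, bound the expected number of rounds before $\hat\mu_1$ first rises above $\eta_a$ (each such round costing at most the cap $n$ but occurring with probability decaying in $n_1$), and use Proposition \ref{lemma-tau-bound} to guarantee that $\tau_1$ stays bounded away from $0$ once $n_1$ is moderately large. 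Summing the resulting per-round bounds telescopes to a universal constant.

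Finally I would assemble $R(n)\le\sum_{a:\mu_a<\mu_*}\Delta_a(R_a^{(1)}+R_a^{(2)})\le\sum_{a:\mu_a<\mu_*}\Delta_a(96\log n/\Delta_a^2+6)$, checking that the hypothesis $n\ge\max\{24\log n/\min_a\Delta_a^2,\,4\}$ is exactly what is needed to make the split point $L_a$ smaller than the horizon and to keep the Gaussian tail/anti-concentration estimates and the concentration of $\tau_a$ (Proposition \ref{lemma-tau-bound}) valid over the relevant range of $n_a$. I expect the bookkeeping of $R_a^{(1)}$ — in particular taming $\E[Q_{1,n_1}^{-1}]$ against the random and possibly small scale $\tau_1$ — to be the main technical hurdle, whereas $R_a^{(2)}$ is a routine tail-plus-Hoeffding computation.
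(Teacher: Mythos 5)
Your setup matches the paper's: both invoke Lemma~\ref{general}, pick the midpoint threshold $\eta_a=\mu_a+\Delta_a/2$, exploit $\tau_a^2\le 1$ (rewards in $[0,1]$) to reduce to fixed-scale Gaussian tails, and handle $R_a^{(2)}$ by splitting the sum at $O(\log n/\Delta_a^2)$ with Hoeffding plus a Gaussian upper-tail bound; that part is sound. The genuine gap is in your treatment of $R_a^{(1)}$. You claim it is a universal constant (the source of the ``$+6$''), to be proved by Gaussian anti-concentration, with Proposition~\ref{lemma-tau-bound} ``guaranteeing that $\tau_1$ stays bounded away from $0$''. Proposition~\ref{lemma-tau-bound} gives no such guarantee: it only shows that $\tau_1^2$ concentrates around $\E[\tau_1^2]$, which tends to $\sigma_1^2$, and under the theorem's hypotheses (arbitrary reward distributions supported in $[0,1]$) $\sigma_1^2$ can be arbitrarily small or zero. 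Worse, the rounds that make $R_a^{(1)}$ dangerous are exactly those with an atypical history --- $\hat\mu_{1,n_1}$ stuck below $\eta_a$ while the residuals, and hence $\tau_1$, have collapsed (e.g.\ a run of identical low rewards) --- and on that event the sampling Gaussian has both its mean below the threshold and vanishing width, so $Q_{1,n_1}(\eta_a)^{-1}$ hits the cap $n$; a concentration-around-the-mean statement cannot exclude this, since it is itself a deviation event. Anti-concentration arguments of the Agrawal--Goyal/Kveton type work for Gaussian TS precisely because the posterior variance there is a deterministic function proportional to $1/n_1$, never data-dependent; SAU-Sampling loses exactly that property, which is why the constant-$R_a^{(1)}$ route does not go through here.

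The paper's proof avoids needing any lower bound on $\tau_1$, at the price of a weaker bound on $R_a^{(1)}$: for $n_1<\bar n_a:=24\log n/\Delta_a^2$ it conditions on the event $\{\hat\mu_{1,n_1}\ge\mu_1\}$, under which the sample only needs to avoid undershooting its own mean by $\Delta_a/2$ --- an upper-tail estimate in which a small $\tau_1$ only helps --- yielding a per-round $O(1)$ bound and hence $R_a^{(1)_L}<3\bar n_a$; for $n_1\ge\bar n_a$ it combines Hoeffding on $\hat\mu_{1,n_1}$ with the same upper tail to get an $O(1)$ total. Thus in the paper $R_a^{(1)}\le 3\bar n_a+4$ is of order $\log n$, not constant, and the advertised constant $96=4\times 24$ arises as $72$ from $R_a^{(1)}$ plus $24$ from $R_a^{(2)}$ --- not, as in your accounting, entirely from $R_a^{(2)}$ with a constant $R_a^{(1)}$. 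So your outline of $R_a^{(2)}$ is essentially the paper's argument, but the $R_a^{(1)}$ step as proposed would fail, and repairing it along the paper's lines changes the entire bookkeeping that produces the stated bound.
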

A proof of Theorem \ref{them-sampling} is provided in Section \ref{sec:proof-sampling} below.

In \textsf{SAU-UCB}, $\tau_{a_n}^2$ is updated at every $n$, but to simplify the analysis we replace $\tau_{a_n}^2$ with a constant $\tau_{a}^{*2}$. This is justified thanks to Proposition \ref{lemma-tau-bound} which says that $\tau_{a_n}^2$ is concentrated around its expectation, which tends to $\mu_a^2+\sigma_a^2$ for growing $n_a$.
With this simplification, we can directly reuse the regret analysis of UCB1, and obtain the following Theorem. 
\begin{theorem}\label{them-UCB}
If \textsf{SAU-UCB} with fixed $\tau_{a_n}^2=\tau_{a}^{*2}$ is run on a $K$-armed bandit problem ($K\geq 2$) with arbitrary reward distributions $P_1,\ldots,P_K$ with support in $[0,1]$, 
then for any $\kappa\geq 2/\min\{\tau_{a}^{*}\}_{a=1}^K$, its expected regret after an arbitrary number $n$ of plays is at most 
\begin{equation*}
\sum\limits_{a: \mu_a<\mu_*}\Delta_a\left(\frac{4\log n}{\Delta_a^2} +1+\frac{\pi^2}{3}
\right).
\end{equation*}
\end{theorem}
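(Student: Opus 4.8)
The plan is to exploit the remark preceding the statement: once the running estimate $\tau_{a_n}^2$ is frozen at the constant $\tau_a^{*2}$, the selection rule of \textsf{SAU-UCB} is literally a UCB1-type index policy, so the classical Auer--Cesa-Bianchi--Fischer regret argument applies almost verbatim. Concretely, with $\tau_{a_n}^2\equiv\tau_a^{*2}$ the quantity maximized in Algorithm~\ref{alg:bandit-sau} is $\widetilde{\mu}_{n,a}=\hat{\mu}_{n,a}+c_{n,n_a}$ for a \emph{deterministic} confidence width $c_{n,s}$ proportional to $\tau_a^*\sqrt{\log n/s}$. So the first step is simply to read off this width and record that, because the rewards lie in $[0,1]$, the empirical mean $\hat{\mu}_{n,a}$ is an average of bounded i.i.d.\ variables to which Hoeffding's inequality applies. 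I would also note at the outset that freezing $\tau_{a_n}^2$ at $\tau_a^{*2}$ is legitimate because Proposition~\ref{lemma-tau-bound} shows $\tau_{a_n}^2$ concentrates around its (convergent) expectation, so the constant-width analysis faithfully tracks the actual algorithm.

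Next I would set up the standard pull-count decomposition. Fixing a suboptimal arm $a$ (with arm $1$ optimal and $\Delta_a=\mu_*-\mu_a>0$), I write $\E[T_a(n)]\le \ell+\sum_{t}\Pr\{a_t=a,\ T_a(t-1)\ge\ell\}$ for a threshold $\ell$ to be chosen, then expand $\{a_t=a\}$: picking $a$ over the optimal arm forces at least one of three sub-events --- (i) the optimal arm's empirical mean falls below $\mu_1$ by more than its bonus, (ii) arm $a$'s empirical mean exceeds $\mu_a$ by more than its bonus, or (iii) the two confidence intervals still overlap because $2c_{t,T_a(t-1)}\ge\Delta_a$. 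The threshold $\ell$ is taken as the smallest integer making sub-event (iii) impossible, i.e.\ the smallest $s$ with $2c_{t,s}<\Delta_a$ for all $t\le n$; solving this inequality yields the $4\log n/\Delta_a^2$ term, with the ceiling producing the additive $+1$. Sub-events (i) and (ii) are then controlled by Hoeffding: each has probability at most $t^{-4}$ once the width is calibrated so the exponent is at least $4\log t$, after which the double sum $\sum_{t}\sum_{s}2t^{-4}\le 2\sum_t t^{-2}=\pi^2/3$ supplies the $\pi^2/3$ term. Multiplying $\E[T_a(n)]$ by $\Delta_a$ and summing over suboptimal arms gives the stated bound.

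The delicate step --- and the one I expect to be the main obstacle --- is the calibration reconciling the two competing demands on the confidence width. Sub-event (iii) wants the width as \emph{small} as possible, to keep the $\log n$ coefficient down, whereas the Hoeffding tails for (i)--(ii) want it \emph{large} enough to force a summable $t^{-4}$ decay; these pull in opposite directions, so the constants $4$, $1$, and $\pi^2/3$ only come out cleanly for a precisely tuned exploration coefficient. This is exactly where the hypothesis $\kappa\ge 2/\min_a\tau_a^*$ enters: it guarantees $\kappa\,\tau_a^*\ge 2$ for \emph{every} arm, which is the sharp lower bound on the (squared) exploration coefficient needed to drive the Hoeffding exponent to $4\log t$ uniformly in $a$ --- the minimal scaling under which the classical UCB1 concentration step goes through. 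The care required is in verifying that this single condition simultaneously suppresses sub-event (iii) at the intended threshold and makes the tail sum telescope to $\pi^2/3$; once that bookkeeping is checked, the remainder of the argument is the routine UCB1 calculation and nothing further about \textsf{SAU} is needed.
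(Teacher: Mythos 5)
Your overall route is the same as the paper's: the paper gives no written-out proof of this theorem at all, only the remark that once $\tau_{a_n}^2$ is frozen at $\tau_a^{*2}$ (justified by the concentration result in Proposition~\ref{lemma-tau-bound}) one can ``directly reuse the regret analysis of UCB1'' --- which is precisely the Auer--Cesa-Bianchi--Fischer machinery you spell out: the pull-count decomposition into three sub-events, a threshold $\ell$ that kills the overlap event, Hoeffding bounds on the two deviation events, and the double sum over the pair of sample sizes producing the $1+\pi^2/3$ term.

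However, the calibration step you yourself flag as the delicate one does not close, and this is a genuine gap. With a deterministic width $c_{t,s}$ whose square is $\gamma\log t/s$, Hoeffding for $[0,1]$-valued rewards gives $\Pr\{\hat{\mu}_{a,s}-\mu_a\ge c_{t,s}\}\le\exp(-2sc_{t,s}^2)=t^{-2\gamma}$, so the $t^{-4}$ tails needed for the $(s,s_a)$ double sum to telescope to $\pi^2/3$ force $\gamma\ge 2$; but the overlap event (iii), namely $\Delta_a<2c_{t,s_a}$, is then only excluded once $s_a\ge 4\gamma\log n/\Delta_a^2\ge 8\log n/\Delta_a^2$, so the leading term is $8\log n/\Delta_a^2$, not $4\log n/\Delta_a^2$. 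Conversely, obtaining the threshold $4\log n/\Delta_a^2$ forces $\gamma=1$, whence the tails are only $t^{-2}$ and the double sum over the (up to $t^2$) pairs of sample sizes diverges. Your proposed resolution via the hypothesis $\kappa\tau_a^*\ge 2$ makes this worse rather than better: taking the squared exploration coefficient to be $\kappa\tau_a^*\ge 2$ inflates the threshold to $4\kappa\tau_a^*\log n/\Delta_a^2\ge 8\log n/\Delta_a^2$. So what your argument (and the classical UCB1 theorem the paper appeals to) actually proves is the bound with $8\log n/\Delta_a^2$ in place of $4\log n/\Delta_a^2$. In fairness, this defect is inherited from the paper itself: Theorem~\ref{them-UCB} is stated with the constant $4$ and with a hypothesis on $\kappa$ that never reappears in the bound, and since the paper supplies no actual proof, the discrepancy is never resolved there either; the honest conclusion of the strategy both you and the paper adopt is the factor-$8$ bound.
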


Generalization of this regret analysis in the case where the constant $\tau_{a_n}^2$ assumption is invalid is left for future work.

\subsection{Proof of Theorem \ref{them-sampling}}
\label{sec:proof-sampling}

\begin{proof} 
We prove Theorem 1 by applying the Gaussian tail bound and Lemma \ref{general}, which is attached in the Appendix. 
In what follows, without loss of generality we assume that action $1$ refers to the optimal action, i.e., $\mu_1=\mu_*$. 
Fix $n$, for simplifying notations, we drop the subindex of $n$ in quantities if without confusion. 
Let $r_{a,1}, r_{a,2}, \cdots, r_{a,n_a}$ be the random variables referring to the rewards yielded by action $a$ of successive $n_a$ plays.
The notation ``$a,j$" means  that the number of plays of action $a$ is $j$. 
$Y_{a}$  is random variable drawn from the normal distribution 
$\mathcal{N}(\hat{\mu}_{a,n_a}, \tau_a^2/n_a)$, 
where 
\begin{align*}
\hat{\mu}_{a,n_a}=\frac{1}{n_a}\sum\limits_{j=1}^{n_a}r_{a,j} \quad\text{ and }
\tau_a^2=\frac{1}{n_a}\sum\limits_{j=1}^{n_a}(r_{a,j}-\hat{\mu}_{a,j-1})^2.
\end{align*}
Rewrite 
\begin{equation}\label{Y-def}
Y_{a}=\hat{\mu}_{a,n_a}+\delta_{a,n_a}, \quad\text{ where } \delta_{a,n_a}\sim \mathcal{N}(0, \tau_a^2/n_a).
\end{equation}
We bound $\delta_{a,n_a}$ in probability.  
By the Gaussian tail bound (Fact 2), for $\alpha>0$, 
\begin{align}\label{bound-delta}
\text{Pr}\left\{\delta_{a,n_a}\geq \alpha | \tau_{a}^2 \right\}
\leq&\exp\left\{-\frac{\alpha^2n_a}{2\tau_a^2}\right\}
\leq\exp\left\{-n_a\alpha^2/2\right\},
\end{align} 
where the second inequality is from that $\tau_a^2\leq1$. 
Eqn.~(\ref{bound-delta}) follows that
\begin{equation}\label{bound-Normal}
\text{Pr}\left\{\delta_{a,n_a}\geq \alpha\right\}\leq\exp\left\{-n_a\alpha^2/2\right\}.
\end{equation}

Denote $c_{1,n}=\sqrt{ n_1^{-1}\log n}$ and $c_{a,n}=\sqrt{ n_a^{-1}\log n}$. 
Let 
$$A_1=\{\hat{\mu}_{1,n_1}>\mu_1-c_{1,n}\}, \text{ and } A_a=\{\hat{\mu}_{a,n_a}<\mu_a+c_{a,n}\} \text{ for } a=2, \cdots, K.$$ 
Denote $\bar{A}_1$ and  $\bar{A}_a$ be the complements of $A_1$ and $A_a$ respectively.
$\text{Pr}\left\{\bar{A}_1\right\}$ and $\text{Pr}\left\{\bar{A}_a\right\}$ are bounded from the Azuma-Hoeffding inequality (Fact 1): 
\begin{align}
\text{Pr}\left\{\bar{A}_1\right\}=\text{Pr}\left\{\hat{\mu}_{1,n_1}\leq\mu_1-c_{1,n}\right\}\leq \exp (-2\log n)=n^{-2};\label{bound-barA1}\\
\text{Pr}\left\{\bar{A}_a\right\}=\text{Pr}\left\{\hat{\mu}_{a,n_a}\geq\mu_a+c_{a,n}\right\}\leq \exp (-2\log n)=n^{-2}.\label{bound-barB}
\end{align}

Define for $\eta\in \mathbb{R}$
\begin{equation*}
Q_{a,n_a}(\eta)=\text{Pr}(Y_a\geq \eta).  
\end{equation*}
Lemma \ref{general}, which is proved by \cite{Kveton:18}, shows that \begin{equation*}
R(n)\leq \sum\limits_{a: \mu_a<\mu_*}\Delta_a(R_{a}^{(1)}+R_{a}^{(2)}), 
\end{equation*}
where 
$$R_{a}^{(1)}=\sum\limits_{n_1=0}^{n-1}\E\left[\min \left\{\frac{1}{Q_{1,n_1}(\eta)}-1, n\right\}\right] \text{ and } R_{a}^{(2)}=\sum\limits_{n_a=0}^{n-1}\text{Pr}[Q_{a,n_a}(\eta)>1/n]+1.$$

We set 
\begin{equation}\label{tau-eqn}
\eta_a=\mu_a+\frac{\Delta_a}{2}=\mu_1-\frac{\Delta_a}{2}. 
\end{equation}
Our proof is to bound the terms $R_{a}^{(1)}$ and $R_{a}^{(2)}$ by applying Lemma \ref{general}. 
Now in the first step we derive the upper bound on $R_{a}^{(1)}$. 
Denote 
$\bar{n}_a=\frac{24\log n}{\Delta_a^2}$. 
Noting $Q_{1,0}(\eta_a)=1$,  
\begin{align}\label{bound-a-result}
R_a^{(1)}=&\sum\limits_{n_1=1}^{n-1}\E\left[\min \left\{\frac{1}{Q_{1,n_1}(\eta_a)}-1, n\right\}\right]\notag\\
=&\sum\limits_{n_1=1}^{\lceil\bar{n}_a\rceil-1}\E\left[\min \left\{\frac{1}{Q_{1,n_1}(\eta_a)}-1, n\right\}\right]+\sum\limits_{n_1=\lceil\bar{n}_a\rceil}^{n-1}\E\left[\min \left\{\frac{1}{Q_{1,n_1}(\eta_a)}-1, n\right\}\right]\notag\\
=:&R_a^{(1)_L}+R_a^{(1)_U},
\end{align}
where $\lceil x\rceil$ is the smallest integer not less than $x$. 

The law of total probability implies that, when $n_1\geq \bar{n}_a$, 
\begin{align}\label{bound-a}
Q_{1,n_1}(\eta_a)=&\text{Pr}\left\{Y_{1}>\eta_a\right\}=1-\text{Pr}\left\{Y_{1}<\eta_a\right\}\notag\\
=&1-\text{Pr}(A_1)\text{Pr}\left\{Y_{1}<\eta_a  | A_1\right\}-\text{Pr}(\bar{A}_1)\text{Pr}\left\{Y_{1}<\eta_a | \bar{A}_1\right\}\notag\\
>&1-\text{Pr}\left\{\hat{\mu}_{1,n_1}+\delta_{1,n_1}<\mu_1-\Delta_a/2 | A_1\right\}-\text{Pr}(\bar{A}_1)\notag\\
\geq & 1-\text{Pr}\left\{\delta_{1,n_1}\leq -\Delta_a/2+c_{1,n}\right\} -\text{Pr}(\bar{A}_1)\notag\\
> & 1-\text{Pr}\left\{\delta_{1,n_1}\leq -\sqrt{2}c_{1,n}\right\} -\text{Pr}(\bar{A}_1)\notag\\
\geq & 1-n^{-1}-n^{-2}, 
\end{align}
where the 1st inequality is from the facts that $\text{Pr}(A_1)<1$ and $\text{Pr}\left\{Y_{1}<\eta_a | \bar{A}_1\right\}<1$, 
the 2nd inequality is from the definition of $A_1$, 
the 3rd inequality is from $\Delta_a/2-c_{1,n}\geq \sqrt{2}c_{1,n}$ as $n_1\geq \bar{n}_a$, 
and the last inequality is from Eqns.~(\ref{bound-Normal}) and (\ref{bound-barA1}). 

Eqn.~(\ref{bound-a}) implies that for $n\geq 3$, 
\begin{align}\label{bound-a-result1}
R_{a}^{(1)_U}=&\sum\limits_{n_1=\lceil\bar{n}_a\rceil}^{n-1}\E\left[\min \left\{\frac{1}{Q_{1,n_1}(\eta_a)}-1, n\right\}\right]
=\sum\limits_{n_1=\lceil\bar{n}_a\rceil}^{n-1}\min \left\{\frac{1}{Q_{1,n_1}(\eta_a)}-1, n\right\}\notag\\
<& \sum\limits_{n_1=\lceil\bar{n}_a\rceil}^{n-1} \frac{1}{1-n^{-1}-n^{-2}}-1
< 2\sum\limits_{n_1=\lceil\bar{n}_a\rceil}^{n-1} (n^{-1}+n^{-2})<4,
\end{align}
where the 1st inequality is from that $\frac{1}{Q_{1,n_1}(\eta_a)}-1<\frac{1}{1-n^{-1}-n^{-2}}-1<n$, 
 the 2nd inequality is from that $\frac{1}{1-n^{-1}-n^{-2}}-1<2(n^{-1}+n^{-2})$ when $n\geq3$.  

Next we investigate the term $R_a^{(1)_L}$  in Eqn.~(\ref{bound-a-result}). 
For it, we now provide another lower bound of the $Q_{1,n_1}(\eta_a)$. 
Let 
$$A_1^L=\{\hat{\mu}_{1,n_1}\geq\mu_1\}.$$ 
Denote $\bar{A}_1^L$ be the complements of $A_1^L$.
We have that 
\begin{equation}
\text{Pr}\left\{A_1^L\right\}=1/2; \ \ \text{Pr}\left\{\bar{A}_1^L\right\}=1/2\label{bound-barAL}
\end{equation}
Similarly as Eqn.~(\ref{bound-a}), we have that 
\begin{align}\label{bound-a2a}
Q_{1,n_1}(\eta_a)
=&\text{Pr}\left\{Y_{1,n_1}>\eta_a \right\}\notag\\
=&1-\text{Pr}(A_1^L)\text{Pr}\left\{Y_{1,n_1}<\eta_a  | A_1^L\right\}-\text{Pr}(\bar{A}_1^L)\text{Pr}\left\{Y_{1,n_1}<\eta_a |  \bar{A}_1^L\right\}\notag\\
>&1-1/2\text{Pr}\left\{\hat{\mu}_{1,n_1}+c_{1,n}\delta_{1,n_1}<\mu_1-\Delta_a/2 |  A_1^L\right\}-\text{Pr}(\bar{A}_1^L)\notag\\
\geq & 1/2-1/2\text{Pr}\left\{c_{1,n}\delta_{1,n_1}\leq -\Delta_a/2\right\}\notag\\
\geq & \frac{1}{2}\left[1-\exp\left(-\frac{n_1\Delta_a^2}{8\log n}\right)\right],
\end{align}
where the 1st inequality is from the facts that $\text{Pr}\left\{Y_{1,n_1}<\eta_a | \bar{A}_1^L\right\}<1$, 
and the 2nd inequality is from the definition of $A_1^L$ and Eqn.~(\ref{bound-barAL}), and the last inequality is from Eqn.~(\ref{bound-Normal}).

We have that (1) $\log(1-\frac{2}{n+1})\geq \frac{-3}{n+1}$ when $n\geq 4$; and  
(2) $-\frac{\Delta_a^2}{8\log n}\leq\frac{-3}{n+1}$ when $\frac{n}{\log n}\geq \frac{24}{\min_a\Delta_a^2}$. 
The two inequalities imply that when $n\geq \max\{\frac{24\log n}{\min_a\Delta_a^2}, 4\}$, 
$$1-\exp\left(-\frac{\Delta_a^2}{8\log n}\right)\geq \frac{2}{n+1}.$$
Thus, Eqn.~(\ref{bound-a2a}) follows that 
\begin{align}\label{bound-a-result2}
R_a^{(1)_L}=&\sum\limits_{n_1=1}^{\lceil\bar{n}_a\rceil-1}\E\left[\min \left\{\frac{1}{Q_{1,n_1}(\eta_a)}-1, n\right\}\right]\notag\\
\leq &\sum\limits_{n_1=1}^{\lceil\bar{n}_a\rceil-1}\left[\frac{2}{1-\exp\left(-\frac{n_1\Delta_a^2}{8\log n}\right)}-1\right]\notag\\
< &\sum\limits_{n_1=1}^{\lceil\bar{n}_a\rceil-1} \left[4\exp\left(-\frac{n_1\Delta_a^2}{8\log n}\right)-1\right]\notag\\
<&3\bar{n}_a. 
\end{align}

Therefore, inserting Eqns.~(\ref{bound-a-result1}) \& (\ref{bound-a-result2}) into Eqn.~(\ref{bound-a-result}), 
\begin{align}\label{bound-a-result-sum}
R_a^{(1)}\leq&3\bar{n}_a+4.
\end{align}

In the next step we derive the upper bound on $R_{a}^{(2)}$. 
When $n_a\geq \bar{n}_a$, 
\begin{equation}\label{Key}
\Delta_a/2-c_{a,n}\geq \sqrt{2}c_{a,n}. 
\end{equation}
From the definition of event $A_a$, the law of total probability implies that 
\begin{align}\label{bound-PrQ}
\text{Pr}\left\{Q_{a,n_a}(\eta_a)>1/n\right\}
=&\text{Pr}(A_a)\text{Pr}\left\{Q_{a,n_a}(\eta_a)>1/n  | A_a\right\}+\text{Pr}(\bar{A}_a)\text{Pr}\left\{Q_{a,n_a}(\eta_a)>1/n | \bar{A}_a\right\}\notag\\
\leq & \text{Pr}\left\{Q_{a,n_a}(\eta_a)>1/n  | A_a\right\}+\text{Pr}(\bar{A}_a). 
\end{align}
When $n_a\geq \bar{n}_a$, given event $A_a$, 
\begin{align}\label{bound-Qk}
Q_{a,n_a}(\eta_a )
=&\text{Pr}\left\{Y_{a}>\eta_a  | A_a\right\}=\text{Pr}\left\{\hat{\mu}_a+\delta_{a,n_a}>\Delta_a/2+\mu_a | A_a\right\}\notag\\
\leq & \text{Pr}\left\{\delta_{a,n_a}\geq \Delta_a/2-c_{a,n}\right\}\notag\\
\leq & \text{Pr}\left\{\delta_{a,n_a}\geq \sqrt{2}c_{a,n}\right\}\notag\\
\leq&\exp\left\{-\log n\right\}=n^{-1}.
\end{align}
where the 1st inequality is from the definition of event $A_a$, the 2nd inequality is from Eqn.~(\ref{Key}), 
the 3rd inequality is from Eqn.~(\ref{bound-Normal}). 

Eqn.~(\ref{bound-Qk}) follow that when $n_a\geq\bar{n}_a$, 
\begin{equation}\label{Qk-part1}
\text{Pr}\left\{Q_{a,n_a}(\eta_a)>1/n  | A_a\right\}=0.
\end{equation}
Inserting Eqn.~(\ref{Qk-part1}) into Eqn.~(\ref{bound-PrQ}), 
\begin{align}\label{b-inequality}
\text{\text{Pr}}[Q_{a,n_a}(\eta_a)>1/n]
\leq &\text{Pr}(\bar{A}_a)
\leq n^{-1},
\end{align}
where the last step is from Eqn.~(\ref{bound-barB}). 

We have that 
\begin{align}\label{bound-b-result}
R_{a}^{(2)}=&\sum\limits_{n_a=0}^{n-1}\text{\text{Pr}}[Q_{a,n_a}(\eta_a)>1/n]+1\notag\\
\leq&\bar{n}_a+\sum\limits_{n_a=\lceil\bar{n}_a\rceil}^{n-1}\text{\text{Pr}}[Q_{a,n_a}(\eta_a)>1/n]+1\notag\\
<&\bar{n}_a+2,
\end{align}
where the 1st inequality is from direct calculation, the 2nd inequality is from Eqn.~(\ref{b-inequality}). 
Based on the bounds of $R_{a}^{(1)}$ in Eqn.~(\ref{bound-a-result}) and $R_{a}^{(2)}$ in Eqn.~(\ref{bound-b-result}), we have the statement 
$$R(n)< \sum\limits_{a=2}^K\Delta_a\left(\frac{96\log n}{\Delta_a^2}+6\right).$$

\end{proof} 

\newpage
\section{Performance of linear contextual bandits in other settings}
\label{appendix:LIN}

\begin{figure}[ht]
\centering
\begin{subfigure}[b]{0.245\columnwidth}
\includegraphics[keepaspectratio,width=\columnwidth]{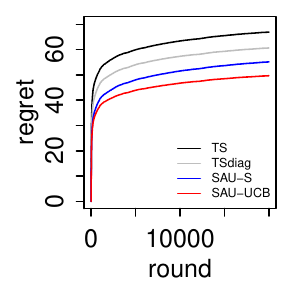}
\includegraphics[keepaspectratio,width=\columnwidth]{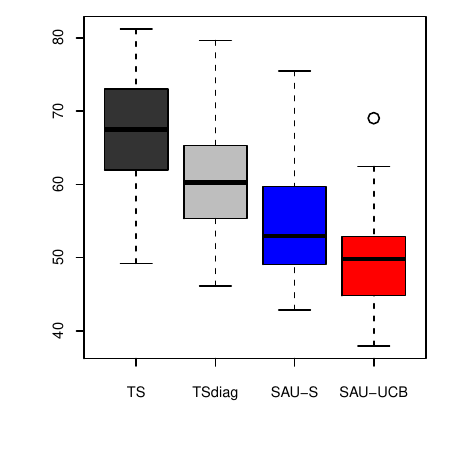}
\subcaption{$\beta$ is from Gaussian.}
\label{M-G}
\end{subfigure}
\begin{subfigure}[b]{0.245\columnwidth}
\includegraphics[keepaspectratio,width=\columnwidth]{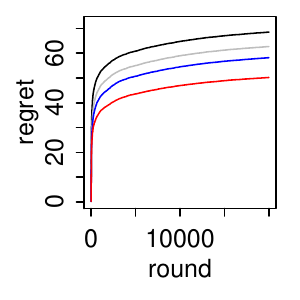}
\includegraphics[keepaspectratio,width=\columnwidth]{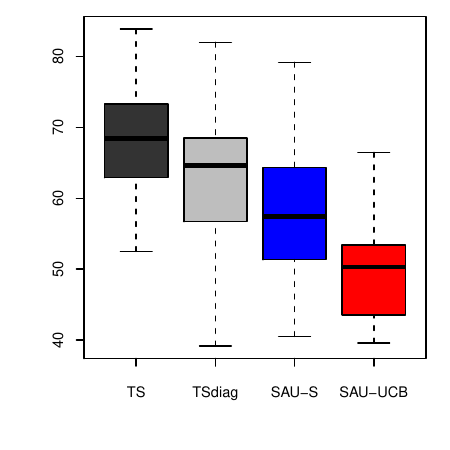}
\subcaption{errors are correlated. }
\label{M-error}
\end{subfigure}
\begin{subfigure}[b]{0.245\columnwidth}
\includegraphics[keepaspectratio,width=\columnwidth]{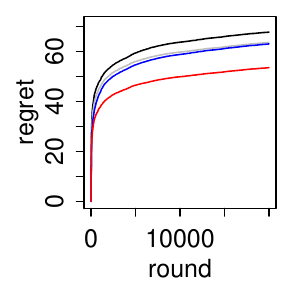}
\includegraphics[keepaspectratio,width=\columnwidth]{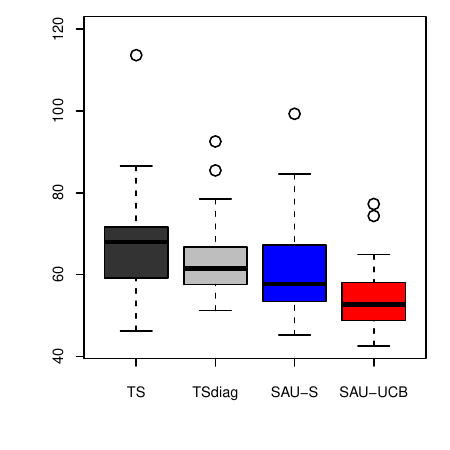}
\subcaption{contexts are correlated. }
\label{M-context}
\end{subfigure}
\begin{subfigure}[b]{0.245\columnwidth}
\includegraphics[keepaspectratio,width=\columnwidth]{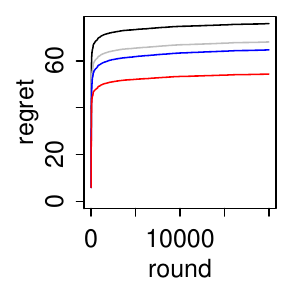}
\includegraphics[keepaspectratio,width=\columnwidth]{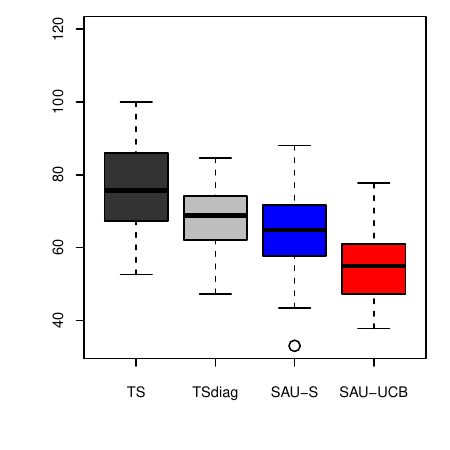}
\subcaption{$x_{ij}\sim t(2)$. }
\label{M-T2}
\end{subfigure}
\caption{Performance of other cases: the elements of $\beta$ are from $\mathcal{N}(0,1)$ then are normalized; the model errors are correlated; and the contexts are correlated; and the elements of $\mbf{x}_i$ are from a heavy-tailed truncated $t$-distribution with $df=2$. 
}
\label{fig:SAU-models}
\end{figure}

\newpage
\section{Wheel Bandit Problem}
\label{appendix:wheel}

The \emph{Wheel Bandit Problem} is a synthetic bandit designed by \cite{Riquelme:18} where the effect of need for exploration can be systematically studied by varying a parameter $\delta \in [0, 1]$.
The difficulty of the problem increases with $\delta$,
since the problem is designed so that for $\delta$ close to 1 most contexts have the same optimal action, while only for a fraction $1 - \delta^2$ of contexts the optimal action is a different more rewarding action (see \cite{Riquelme:18} for more details).

\begin{table}[ht]
  \caption{Cumulative regret incurred on the Wheel Bandit Problem by the 4 best algorithms identified by \cite{Riquelme:18} compared to our SAU-based algorithms (in bold) with increasing values of $\delta$.
  We report the mean and standard error of the mean over 50 trials.
  The best cumulative regret value for each task is marked in bold.}
  \vspace{0.1cm}
  \centering
    {\fontsize{8.5}{11}\selectfont
    \begin{tabular}{llllll}
$\delta$ &            0.50 &           0.70 &           0.90 &           0.95 &            0.99 \\
\midrule
Linear         &    60.62 ± 3.55 &   83.56 ± 5.21 &   84.11 ± 4.72 &   86.71 ± 4.08 &    93.61 ± 2.74 \\
LinearGreedy   &    88.95 ± 0.19 &  124.38 ± 0.46 &  122.18 ± 1.05 &  119.85 ± 1.55 &    99.59 ± 2.61 \\
NeuralGreedy   &    35.00 ± 0.90 &   56.65 ± 1.50 &   72.04 ± 1.56 &   83.97 ± 2.53 &    91.18 ± 1.91 \\
NeuralLinear   &    18.71 ± 2.05 &   26.63 ± 1.21 &   45.47 ± 2.91 &   65.44 ± 3.23 &    {\bf 86.03 ± 3.03} \\
{\bf SAU-Linear-S}   &    34.74 ± 1.63 &   42.69 ± 2.27 &   54.88 ± 3.01 &   71.37 ± 3.57 &    {\bf 87.38 ± 2.85} \\
{\bf SAU-Linear-UCB} &    42.23 ± 1.55 &   60.08 ± 2.12 &   78.51 ± 4.93 &   92.42 ± 4.67 &    96.79 ± 2.50 \\
{\bf SAU-Neural-S}   &    {\bf 12.49 ± 4.11} &   {\bf 13.72 ± 0.85} &   {\bf 36.54 ± 3.03} &   {\bf 63.30 ± 3.53} &  106.89 ± 14.17 \\
{\bf SAU-Neural-UCB} &    26.29 ± 2.32 &   51.92 ± 2.03 &   73.87 ± 4.58 &   75.95 ± 2.24 &    89.99 ± 2.65 \\
Uniform        &  100.00 ± 28.43 &  100.00 ± 0.46 &  100.00 ± 0.86 &  100.00 ± 1.41 &   100.00 ± 2.75 \\
\bottomrule
\end{tabular}}
  \label{tab:wheel_regret_norm}
\end{table}

\begin{table}[ht]
  \caption{Same results as the Table \ref{tab:wheel_regret_norm} above, but not normalized with respect to the uniform exploration strategy}
  \vspace{0.1cm}
  \centering
    {\fontsize{8.0}{11}\selectfont
    \begin{tabular}{llllll}
$\delta$ &               0.50 &              0.70 &             0.90 &            0.95 &            0.99 \\
\midrule
Linear         &   49742.3 ± 2911.4 &  33109.1 ± 2063.5 &  12816.6 ± 719.3 &  6954.2 ± 327.0 &   1864.8 ± 54.6 \\
LinearGreedy   &    72982.5 ± 158.6 &   49287.7 ± 183.8 &  18617.9 ± 160.2 &  9611.9 ± 124.3 &   1983.9 ± 51.9 \\
NeuralGreedy   &    28722.7 ± 739.4 &   22448.0 ± 592.6 &  10977.4 ± 237.2 &  6733.9 ± 203.1 &   1816.5 ± 38.0 \\
NeuralLinear   &   15350.4 ± 1685.2 &   10550.6 ± 479.5 &   6929.1 ± 443.4 &  5248.1 ± 259.2 &   {\bf 1713.9 ± 60.4} \\
{\bf SAU-Linear-S}   &   28506.8 ± 1336.8 &   16914.3 ± 899.3 &   8362.1 ± 459.0 &  5723.6 ± 286.2 &   {\bf 1740.8 ± 56.9} \\
{\bf SAU-Linear-UCB} &   34648.4 ± 1275.8 &   23806.7 ± 839.6 &  11962.9 ± 751.8 &  7411.6 ± 374.3 &   1928.2 ± 49.8 \\
{\bf SAU-Neural-S}   &   {\bf 10247.5 ± 3369.4} &    {\bf 5437.5 ± 335.5} &   {\bf 5568.3 ± 461.1} &  {\bf 5076.8 ± 283.1} &  2129.3 ± 282.2 \\
{\bf SAU-Neural-UCB} &   21572.5 ± 1903.9 &   20571.7 ± 803.7 &  11256.4 ± 697.7 &  6091.3 ± 179.7 &   1792.8 ± 52.9 \\
Uniform        &  82053.3 ± 23324.5 &   39625.3 ± 182.9 &  15238.1 ± 130.8 &  8019.8 ± 113.2 &   1992.1 ± 54.7 \\
\bottomrule
\end{tabular}

}
  \label{tab:wheel_regret}
\end{table}

\newpage
\section{Real-world Datasets}
\label{appendix:datasets}

Here below we list all the dataset used to reproduce the deep bandit experiments in \cite{Riquelme:18}.
Most datasets are from he UCI Machine Learning Repository \cite{Dua2017} and were manually inspected to verify that they do not contain personally identifiable information or offensive content.

\paragraph{Mushroom.} The Mushroom Dataset \cite{Schlimmer1981} has $N=8124$ samples with $d=117$ features (one-hots from 22 categorical attributes), divided in 2 classes (`poisonous' and `edible'). As in \cite{Blundell2015}, the bandit problem has $k=2$ actions: `eat' mushroom or `pass'. If sample is in class `edible' action `eat' gets reward $r=+5$. If sample is in class `poisonous' action `eat' gets reward $r=+5$ with probability 1/2 and reward $r=-35$ otherwise. Action `pass' gets reward of 0 in all cases. We use a number of steps of $T=50000$.

\paragraph{Statlog.} The Shuttle Statlog Dataset \cite{Dua2017} has $N=43500$ samples with $d=9$ features, divided in 7 classes. In the bandit problem the agent has to select the right class ($k=7$ actions), in which case it gets reward $r=1$, and $r=0$ otherwise. The number of steps is $T=43500$. Note: one action is the optimal one in 80\% of cases, meaning that a successful algorithm has to avoid committing to this action and explore instead.

\paragraph{Covertype.} The Covertype Dataset \cite{Dua2017} has $N=581012$ samples with $d=54$, divided in 7 classes. In the bandit problem the agent has to select the right class ($k=7$ actions), in which case it gets reward $r=1$, and $r=0$ otherwise. The number of steps is $T=50000$.

\paragraph{Financial.} The Financial Dataset was created as in 
\cite{Riquelme:18} by pulling stock prices of 21 publicly traded companies in NYSE and Nasdaq between 2004 and 2018. The dataset has $N=3713$ samples with $d=21$ features. The arms are synthetically created to be a linear combination of the features, representing $k=8$ different portfolios. The number of steps is $T=3713$. Note: this is a very the small horizon, meaning that algorithms may over-explore at the beginning with no time to amortize the initial regret.

\paragraph{Jester.} The Jester Dataset \cite{Goldberg200} is preprocessed following \cite{Riquelme:18} to have $N=19181$ samples with $d=32$ features associated to 8 continuous outputs. The agent selects one of the $k=8$ outputs and obtains the reward corresponding to selected output. The number of steps is $T=19181$.

\paragraph{Adult.} The training partition of Adult Dataset \cite{Dua2017} has $N=30162$ samples (after dropping rows with NA values). As in \cite{Riquelme:18} we turn the dataset into a contextual bandit by considering the $k=14$ occupations as feasible actions to be selected based on the remaining $d=92$ features (after encoding categorical variables into one-hots). The agent gets a reward of $r=1$ for making the right prediction, and $r=0$ otherwise. The number of steps is $T=30162$.

\paragraph{Census.} The US Census (1990) Dataset \cite{Dua2017} has $N=2458285$ samples with $d=387$ features (after encoding categorical features to one-hots). The goal in the bandit tasks is to predict the occupation feature among $k=9$ classes. The agent obtains reward $r=1$ for making the right prediction, and $r=0$ otherwise. The number of steps is $T=25000$.

\newpage
\section{Additional algorithms and results}
\label{appendix:additional}

\begin{algorithm}[th]
\begin{algorithmic}[1]
\algnotext{EndFunction}

\Function{Predict}{$\mbf{x}_{n}$} 
    \State {\bf Return} predicted values $\{\hat{\mu}_{n,a}\}_{a\in \mathbb{K}} = \mu(\mbf{x}_{n},\sbf{\hat{\theta}}_n)$;
\EndFunction
\vspace{0.1cm}

\Function{Action}{$\{\hat{\mu}_{n,a}\}_{a\in \mathbb{K}}$}
    \State Compute $a^* = \arg\max_a(\{\tilde{\mu}_{n,a}\}_{a\in\mathbb{K}})$;
    \State {\bf Return} $a_n=
    \begin{cases}
        a^*, \text{with probability }1-\epsilon\\
        a \in \mathbb{K} \text{ uniformly at random, otherwise;}
     \end{cases}$
     (\textbf{Exploration})
\EndFunction
\vspace{0.1cm}

\Function{Update}{$r_n, a_n, \mbf{x}_{n}$}
    \State Compute loss $l_n=(r_n - \hat{\mu}_{n,a_n})^2$;
    \State Update model parameters $\sbf{\hat{\theta}}_{n+1}$ using SGD with gradients $\frac{\partial l_n}{\partial \sbf{\hat{\theta}}_{n}}$ (or mini-batch version);
    \State Update exploration parameter: decrease $\epsilon$ according to annealing schedule;
\EndFunction

\end{algorithmic}

\caption{$\epsilon$-greedy exploration for Deep Contextual Bandits (NeuralGreedy)}
\label{alg:epsilon-greedy-deep}
\end{algorithm}

\begin{table}[ht]
  \caption{Cumulative regret incurred on the contextual bandits in \cite{Riquelme:18} by the 4 best algorithms that they identified and described in Appendix \ref{appendix:datasets} compared to our SAU-based algorithms (in bold).
  This table shows the same results as Table \ref{tab:regret_norm} without normalizing to the Uniform algorithm.
  We report the mean and standard error of the mean over 50 trials.
  The best cumulative regret value for each task is marked in bold.}
  \vspace{0.1cm}
  \centering
    {\fontsize{6.0}{11}\selectfont
    \begin{tabular}{llllllll}
{} &       {\bf Mushroom} &        {\bf Statlog} &      {\bf Covertype} &      {\bf Financial} &         {\bf Jester} &          {\bf Adult} &         {\bf Census} \\
\midrule
LinearPosterior         &    7373.7 ± 376.1 &    3837.4 ± 70.0 &   15807.6 ± 28.2 &    509.4 ± 7.3 &  61674.5 ± 383.8 &  31861.4 ± 26.7 &   10380.0 ± 17.6 \\
LinearGreedy   &   11332.3 ± 862.1 &  40725.9 ± 354.8 &  22933.0 ± 858.6 &   731.9 ± 48.4 &  60427.6 ± 433.7 &  36409.3 ± 99.9 &  14525.9 ± 348.7 \\
NeuralLinear   &    6503.8 ± 200.8 &      469.7 ± 9.6 &   12509.9 ± 28.1 &    480.7 ± 8.6 &  66948.0 ± 411.1 &  32874.2 ± 30.3 &    9113.7 ± 19.9 \\
NeuralGreedy   &   65153.3 ± 771.9 &  14974.8 ± 173.3 &   38085.6 ± 53.5 &  4060.0 ± 64.7 &  89748.9 ± 740.6 &  41559.7 ± 11.3 &   19075.3 ± 57.9 \\
{\bf Linear-SAU-S}  &   11188.6 ± 954.0 &    3892.8 ± 97.2 &   15554.1 ± 49.5 &   411.9 ± 24.8 &  60304.6 ± 413.2 &  31370.9 ± 55.6 &    8812.4 ± 25.5 \\
{\bf Linear-SAU-UCB} &    7563.5 ± 375.3 &    3736.8 ± 80.5 &   15761.4 ± 74.7 &   307.7 ± 17.7 &  62078.3 ± 381.8 &  {\bf 31337.0 ± 29.2} &    8885.6 ± 20.7 \\
{\bf Neural-SAU-S}   &    {\bf 5381.1 ± 131.0} &      232.6 ± 5.2 &   {\bf 11771.5 ± 26.9} &    264.7 ± 5.3 &  {\bf 58795.1 ± 548.8} &  32765.8 ± 30.7 &    8614.5 ± 18.2 \\
{\bf Neural-SAU-UCB} &    5665.7 ± 143.5 &      {\bf 225.0 ± 5.5} &   11958.2 ± 24.5 &    {\bf 248.8 ± 7.1} &  59997.0 ± 586.6 &  32793.4 ± 27.3 &    {\bf 8565.7 ± 19.6} \\
Uniform        &  244431.4 ± 583.4 &    37275.8 ± 9.9 &    42864.5 ± 9.6 &  4729.0 ± 69.4 &  96353.3 ± 923.2 &   41994.5 ± 7.1 &    22226.4 ± 8.0 \\
\bottomrule
\end{tabular}}
  \label{tab:regret_notnorm}
\end{table}

\begin{table}[ht]
  \caption{Running times for the simulations in Tables \ref{tab:regret_norm} and \ref{tab:regret_notnorm} run on Intel(R) Xeon(R) Gold 6248 CPU @ 2.50GHz.
  Results are averaged over 3 trials and the corresponding standard errors are also shown.
  Simulation time of SAU algorithms is consistently close the corresponding epsilon-greedy algorithm, confirming its computational efficiency.}
  \vspace{0.1cm}
  \centering
{\fontsize{6.5}{11}\selectfont
\begin{tabular}{llllllll}
{} &       {\bf Mushroom} &        {\bf Statlog} &      {\bf Covertype} &      {\bf Financial} &         {\bf Jester} &          {\bf Adult} &         {\bf Census} \\
\midrule
LinearPosterior         &  1203.2 s ± 4.9 &    384.0 s ± 1.9 &   900.5 s ± 37.5 &  10.8 s ± 0.2 &   115.0 s ± 3.1 &  1554.1 s ± 8.7 &  7273.3 s ± 31.3 \\
LinearGreedy   &   542.6 s ± 6.7 &    334.0 s ± 7.9 &    406.4 s ± 0.8 &   3.0 s ± 0.0 &    56.4 s ± 1.0 &   359.5 s ± 3.8 &    141.9 s ± 1.1 \\
NeuralGreedy   &   112.5 s ± 4.0 &     70.7 s ± 1.2 &     85.8 s ± 1.4 &   6.0 s ± 0.1 &    28.5 s ± 0.1 &    87.3 s ± 2.5 &     78.4 s ± 0.4 \\
NeuralLinear   &  790.6 s ± 23.0 &  1032.4 s ± 44.8 &  1282.5 s ± 13.8 &  80.0 s ± 0.9 &  452.5 s ± 13.8 &  1997.1 s ± 6.8 &    784.2 s ± 5.7 \\
{\bf Linear-SAU-S}   &   463.6 s ± 1.1 &    332.3 s ± 9.0 &   479.7 s ± 70.3 &   3.2 s ± 0.0 &    55.6 s ± 0.5 &  335.4 s ± 24.9 &    150.6 s ± 3.5 \\
{\bf Linear-SAU-UCB} &   460.2 s ± 2.2 &    338.8 s ± 8.0 &    398.1 s ± 1.5 &   3.2 s ± 0.0 &    56.6 s ± 1.3 &   308.3 s ± 0.9 &    144.7 s ± 0.6 \\
{\bf Neural-SAU-S}   &   116.4 s ± 3.9 &     77.5 s ± 2.2 &     99.2 s ± 3.4 &   6.3 s ± 0.6 &    29.7 s ± 0.3 &    86.0 s ± 3.0 &     81.8 s ± 0.4 \\
{\bf Neural-SAU-UCB} &   115.6 s ± 4.2 &     81.0 s ± 2.0 &     97.5 s ± 1.4 &   6.1 s ± 0.4 &    29.3 s ± 0.1 &    85.6 s ± 3.3 &     82.1 s ± 0.1 \\
Uniform        &    31.9 s ± 0.7 &     13.5 s ± 0.0 &     20.8 s ± 0.1 &   0.7 s ± 0.0 &     3.4 s ± 0.0 &    14.5 s ± 0.0 &     34.4 s ± 0.2 \\
\bottomrule
\end{tabular}

}
  \label{tab:times}
\end{table}

\subsection*{Hyperparameters for deep contextual bandits}
\label{appendix:parameters}

Here we list the settings of the main hyperparameter of our simulations.
For more details on how to reproduce our results, please refer to our released code.

\begin{itemize}
\item All neural network models are an MLP with 2 hidden layers of size 100 with ReLU activations.
\item Training is generally done with Adam SGD \cite{Kingma2014} with $\beta_1=0.9$, $\beta_2=0.999$, and learning rate 0.003.
\item The default model updating consisted in training for $t_s=10$ mini-batches of size 64 every $t_f=20$ steps.
\item The hyperparameters for the competing Bayesian bandit algorithms are based on those mentioned in \cite{Riquelme:18}.
\item For \emph{Linear-SAU-Sampling} and \emph{Linear-SAU-UCB} the parameter \texttt{lambda\_prior} (the initial variance of the inverse-covariance matrix) is set to 20.0 for all datasets, apart from Financial (which is rather small) for which it is set to 0.25.
\end{itemize}

\newpage
\section{Paper Checklist}

\begin{enumerate}

\item For all authors...
\begin{enumerate}
  \item Do the main claims made in the abstract and introduction accurately reflect the paper's contributions and scope?
    \answerYes{We support our claims with both theoretical proofs and empirical validations.}
  \item Did you describe the limitations of your work?
    \answerYes{Particularly in the discussion section of our paper we mention some theoretical limitations of our work, as well as its limitations due to the restricted application domain.}
  \item Did you discuss any potential negative societal impacts of your work?
    \answerYes{In the discussion we mentioned the potential negative societal impacts that deploying inaccurate exploration algorithms might have due to their use for instance in recommendation and ad servicing systems.}
  \item Have you read the ethics review guidelines and ensured that your paper conforms to them?
    \answerYes{}
\end{enumerate}

\item If you are including theoretical results...
\begin{enumerate}
  \item Did you state the full set of assumptions of all theoretical results?
    \answerYes{All our theoretical results and their proofs clearly state the assumptions under which they are valid.}
	\item Did you include complete proofs of all theoretical results?
    \answerYes{We provided complete proofs of all our theoretical results in the indicated Appendix sections.}
\end{enumerate}

\item If you ran experiments...
\begin{enumerate}
  \item Did you include the code, data, and instructions needed to reproduce the main experimental results (either in the supplemental material or as a URL)?
    \answerYes{We included pseudocode of our algorithms in the paper, and provide the code along with instructions to reproduce our experiments as a zipped repo.}
  \item Did you specify all the training details (e.g., data splits, hyperparameters, how they were chosen)?
    \answerYes{We specified how the data was used in the Appendix, as well as in the released code.}
	\item Did you report error bars (e.g., with respect to the random seed after running experiments multiple times)?
    \answerYes{We reported standard errors around the average performance scores.}
	\item Did you include the total amount of compute and the type of resources used (e.g., type of GPUs, internal cluster, or cloud provider)?
    \answerYes{In the Appendix we mentioned what type of machine the main experiments were run on and the time it took to run them.}
\end{enumerate}

\item If you are using existing assets (e.g., code, data, models) or curating/releasing new assets...
\begin{enumerate}
  \item If your work uses existing assets, did you cite the creators?
    \answerYes{Citations are in the Appendix.}
  \item Did you mention the license of the assets?
    \answerYes{Licenses are mentioned in the released code.}
  \item Did you include any new assets either in the supplemental material or as a URL?
    \answerYes{We included our code released as an anonymized github repository.}
  \item Did you discuss whether and how consent was obtained from people whose data you're using/curating?
    \answerYes{We mentioned citations and licenses in the Appendix and in the released code.}
  \item Did you discuss whether the data you are using/curating contains personally identifiable information or offensive content?
    \answerYes{In the Appendix we mention that the dataset were inspected to check that they do not contain personally identifiable information or offensive content.}
\end{enumerate}

\end{enumerate}

\end{appendices}

\end{document}